\documentclass[twoside,11pt,preprint]{article}

\usepackage{blindtext}

%

%
%
%

\usepackage{subcaption}
\usepackage{amsmath}
\usepackage{color}
\usepackage{enumitem}

\usepackage{jmlr2e}
\hypersetup{hidelinks}


\definecolor{NavyBlue}{rgb}{0.1,0.1,0.6}

\newcommand{\diff}{\mathrm{d}}

\newcommand{\R}{\mathbb{R}}

\newcommand{\bu}{\boldsymbol u}
\newcommand{\bI}{\boldsymbol I}
\newcommand{\bA}{\boldsymbol A}
\newcommand{\bC}{\boldsymbol C}
\newcommand{\bZ}{\boldsymbol Z}
\newcommand{\bP}{\boldsymbol P}
\newcommand{\bnu}{\boldsymbol \nu}
\newcommand{\bK}{\boldsymbol K}
\newcommand{\bq}{\boldsymbol q}
\newcommand{\bx}{\boldsymbol x}
\newcommand{\bX}{\boldsymbol X}
\newcommand{\bv}{\boldsymbol v}
\newcommand{\by}{\boldsymbol y}
\newcommand{\br}{\boldsymbol r}
\newcommand{\bw}{\boldsymbol w}
\newcommand{\bvarphi}{\boldsymbol \varphi}
\newcommand{\bz}{\boldsymbol z}
\newcommand{\bk}{\boldsymbol k}
\newcommand{\bmu}{\boldsymbol \mu}
\newcommand{\btheta}{\boldsymbol \theta}
\newcommand{\bM}{\boldsymbol M}
\newcommand{\bzero}{\mathbf{0}}

\renewcommand{\leq}{\leqslant}
\renewcommand{\geq}{\geqslant}

\newtheorem{thm}{Theorem}
\newtheorem{lem}{Lemma}
\newtheorem{coro}{Corollary}


\usepackage{lastpage}
\jmlrheading{24}{2023}{1-\pageref{LastPage}}{12/22; Revised
	4/23}{5/23}{22-1395}{Rapha\"el Berthier}


\ShortHeadings{Incremental~Learning in~Diagonal~Linear~Networks}{Berthier}
\firstpageno{1}

\begin{document}

\title{Incremental~Learning in~Diagonal~Linear~Networks}

\author{\name Raphaël Berthier \email raphael.berthier@epfl.ch \\
	\addr EPFL\\
	Lausanne, Switzerland}

\editor{Lorenzo Rosasco}

\maketitle

\begin{abstract}
	Diagonal linear networks (DLNs) are a toy simplification of artificial neural networks; they consist in a quadratic reparametrization of linear regression inducing a sparse implicit regularization. In this paper, we describe the trajectory of the gradient flow of DLNs in the limit of small initialization. We show that incremental learning is effectively performed in the limit: coordinates are successively activated, while the iterate is the minimizer of the loss constrained to have support on the active coordinates only. This shows that the sparse implicit regularization of DLNs decreases with time. This work is restricted to the underparametrized regime with anti-correlated features for technical reasons. 
\end{abstract}

\begin{keywords}
	diagonal linear networks, incremental learning, saddle-to-saddle dynamics, implicit bias, Lotka-Volterra
\end{keywords}

\section{Introduction}
\label{sec:intro}

Artificial neural networks are the state of the art for many machine learning tasks~\citep{lecun2015deep}; however, we lack theoretical understanding of this success \citep{zhang2021understanding}. Indeed, the parametrization of neural networks induces a non-convex loss, and consequently it is challenging to analyze the optimization error of gradient descent methods. Moreover, neural networks can be successful even without any (explicit) regularizer; this challenges the statistical wisdom in overparametrized settings. 

Recent research suggests that these two problems are intertwined: through its non-convex parametrization, the gradient descent dynamics of neural networks induce an implicit regularization that controls the statistical performance \citep{bartlett2021deep}. However, this phenomenon is difficult to describe because it is a joint effect of the parametrization, the gradient descent dynamics and the initialization. 

As a consequence, theoretical research has focused on studying implicit regularization in toy simplifications of neural networks \citep{soudry2018implicit,gunasekar2017implicit,li2018algorithmic,chizat2020implicit,li2020towards}. We are interested in an extreme simplification, called \emph{diagonal linear networks} (DLNs) \citep{vaskevicius2019implicit,zhao2019implicit,woodworth2020kernel,haochen2021shape,li2021implicit,azulay2021implicit,pesme2021implicit,vivien2022label,nacson2022implicit,chou2021more}. In fact, it is only a linear regression where regressors $\theta_i$ are parametrized quadratically; specifically, in this paper, we parametrize $\theta_i = u_i^2/4$. We then perform a gradient descent in terms of $u_i$ and not $\theta_i$ (see Section \ref{sec:setting} for more details). This quadratic reparametrization is loosely argued to have an effect similar to the composition of two layers in a neural network. When started from a small initialization, DLNs were rigorously shown to enforce an implicit sparse regularization; in an overparametrized setting, DLNs converge to a sparse interpolator. 

Previous works have thus focused on describing the limit point of the dynamics of DLNs. Instead, in this work, we study the full trajectory of the continuous-time gradient flow dynamics. In the limit of small initialization, we show that \emph{incremental learning} (see, e.g., \cite{saxe2019mathematical,gidel2019implicit,gissin2019implicit,li2020towards}) is effectively performed: as time increases, coordinates are successively activated and the iterate is the minimizer of the loss constrained to have support on the active coordinates only. The main contribution of this paper is the description of the time-dependent set of active coordinates, and the rigorous proof of the convergence to a regressor whose sparsity depends on the stopping time. The take-home message is that DLNs enforce a sparse implicit regularization that decreases as the stopping time increases. 

As a corollary of our description of the dynamics, we obtain an asymptotic equivalent of the convergence time to the minimizer of the loss in the limit of small initialization. It is quite remarkable that such a precise estimate of the global convergence time can be obtained for a non-convex optimization problem.

For technical reasons, our work is restricted to the special case of anti-correlated feature; consequently, we study only underparametrized problems (see Section \ref{sec:assumptions}). However, we do not expect this restriction to be necessary for incremental learning to occur in DLNs. We leave the proof of this for future work.

The rest of this paper is organized as follows. In Section \ref{sec:main}, we present our setting, our assumptions and our results. In Section \ref{sec:related}, we articulate the related work in more detail. Sections \ref{sec:proof-thm-main} and \ref{sec:proof-coro-main} prove our results. 

\bigskip\noindent
\emph{Notations.} 
We use bold notations for vectors and matrices: for instance, if $\btheta \in \R^d$ is a multi-dimensional vector, we denote $\theta_i$ its coordinates. Similarly, if $\bM \in \R^{d \times d}$, we denote $M_{ij}$ its entries. If $I$ is a subset of $\{1,\dots,d\}$, we denote $I^{c}$ its complement and $|I|$ its cardinality. We denote $\btheta_I \in \R^{|I|}$ the subvector obtained from $\btheta \in \R^d$ by keeping only the coordinates indexed by $i \in I$. Similarly, if $I,J$ are subsets of $\{1,\dots,d\}$, we denote $\bM_{IJ}$ the submatrix obtained from $\bM$ by keeping only the rows indexed by $i \in I$ and columns indexed by $j \in J$.

If $\btheta, \bnu \in \R^d$, we write $\btheta \geq \bnu$ (resp.~$\btheta > \bnu$) to denote that for all $i \in \{1,\dots,d\}$, $\theta_i \geq \nu_i$ (resp.~$\theta_i > \nu_i$). In particular, $\btheta \geq \bzero$ (resp.~$\btheta >\bzero$) denotes that all coordinates of $\btheta$ are non-negative (resp.~positive). 

We use the notations $\langle . , . \rangle$ and $\Vert . \Vert$ to denote the Euclidean dot product and norm respectively.

\section{Main Results}
\label{sec:main}

In this section, we first introduce the parametrization of DLNs and the induced gradient flow dynamics (Section \ref{sec:setting}). Then, we state the assumption that features are anti-correlated and discuss the consequences (Section \ref{sec:assumptions}). Finally, we state our results (Section \ref{sec:statement}).

\subsection{Setting}\label{sec:setting} We perform a linear regression of $n$ output variables $y_1, \dots, y_n \in \R$ from $n$~corresponding input variables $\bx_1, \dots, \bx_n \in \R^d$. The traditional approach is to minimize the quadratic loss:
\begin{equation}
	\label{eq:optim-pb}
	\underset{\btheta \in \R^d}{\rm{min}.} \left\{ f(\btheta) = \frac{1}{2} \sum_{k=1}^{n} \left(y_k - \left\langle \btheta, \bx_k \right\rangle \right)^2 \right\} \, .
\end{equation}
Denote $\by = (y_1, \dots, y_n) \in \R^n$, $\bX \in \R^{n \times d}$ the matrix whose rows are $\bx_1,\dots,\bx_n \in \R^d$, and $\bX_1, \dots, \bX_d \in \R^n$ the columns of the matrix $\bX$, i.e., the features of the regression problem. The quadratic loss $f$ can be expressed as a function of the covariance $\bM = \bX^\top \bX \in \R^{d \times d}$ of the features and of the covariance $\br = \bX^\top \by$ between the features and the output:
\begin{equation*}
	f(\btheta) = \frac{1}{2} \Vert \by \Vert^2 - \langle \br, \btheta \rangle + \frac{1}{2} \langle \btheta,  \bM \btheta \rangle \, .
\end{equation*}
A strategy to minimize this convex function is to perform a gradient descent, i.e., an Euler discretization of the gradient flow 
\begin{equation}
	\frac{\diff \theta_i}{\diff t} = -  \frac{\diff f}{\diff \theta_i} = r_i - \sum_{j=1}^{d} M_{ij} \theta_j \, , \label{eq:GF-classical}
\end{equation}
where $\btheta = \btheta(t)$ is a function from $\R_{\geq 0}$ to $\R^d$. It is widely known that for any initial point, this gradient flow converges exponentially fast to a minimizer of $f$. 

In this paper, we are interested in the effect of reparametrizing 
\begin{equation*}
	\theta_i = \frac{1}{4} u_i^2 \, .
\end{equation*}
This reparametrization of linear regression is called a diagonal linear network (DLN). We perform the gradient flow in terms of $\bu \in \R^d$ instead of $\btheta \in \R^d$: $\frac{\diff u_i}{\diff t} = -  \frac{\diff f}{\diff u_i}$. Using that $\diff \theta_i = \frac{1}{2} u_i \diff u_i$, we compute the resulting equation in $\theta_i$:
\begin{align*}
	\frac{\diff \theta_i}{\diff t} = \frac{1}{2} u_i \frac{\diff u_i}{\diff t} = - \frac{1}{2} u_i \frac{\diff f}{\diff u_i} = -\frac{1}{4} u_i^2 \frac{\diff f}{\diff \theta_i} \, , 
\end{align*}
and thus 
\begin{align}
	\frac{\diff \theta_i}{\diff t} = - \theta_i \frac{\diff f}{\diff \theta_i} = \theta_i \left( r_i - \sum_{j=1}^{d} M_{ij} \theta_j \right) \, . \label{eq:GF-reparametrized}
\end{align}
Compare \eqref{eq:GF-reparametrized} with \eqref{eq:GF-classical}. The reparametrization has added a factor $\theta_i$ in the derivative of $\theta_i$. This implies that if $\theta_i$ is initialized at $0$, then it remains at $0$ in the DLN dynamics \eqref{eq:GF-reparametrized}. In particular, $\btheta = \bzero \in \R^d$ is a stable point of the dynamics. 

In this paper, we are interested in the DLN when initialized close to this stable point. More precisely, for $\varepsilon > 0$, define $\btheta^{(\varepsilon)} = \btheta^{(\varepsilon)}(t)$ as the solution of the DLN dynamics \eqref{eq:GF-reparametrized} initialized from $\btheta^{(\varepsilon)}(0) = (C_1 \varepsilon^{k_1}, \dots, C_d \varepsilon^{k_d})$, where $\bC = (C_1, \dots, C_d) > \bzero$ and  $\bk = (k_1, \dots, k_d) > \bzero$ are constants. 

\subsection{Assumptions} 
\label{sec:assumptions}
Before we get to a rigorous statement of our results, let us state our assumptions.
\begin{enumerate}[font={\bfseries},label={(A\arabic*)}]
	\item\label{it:ass-r} $\br = \bX^\top \by > \bzero$, i.e.,~the covariance $r_i = \langle \bX_i, \by \rangle$ between the output $\by$ and the feature $\bX_i$ is positive for all $i \in \{1, \dots, d\}$.
	
	\bigskip\noindent
	The reparametrization $\theta_i = \frac{1}{4} u_i^2 $ constrains the linear regression to have non-negative weights. In this situation, it is natural to preprocess the data by potentially changing the signs of the features $\bX_1, \dots, \bX_d$ so that the output is positively correlated with the features. Assumption \ref{it:ass-r} assumes that this pre-processing has been done, and---for technical reasons---that the correlations are non-zero.
	
	\bigskip
	
	\item\label{it:ass-M} For all $i \neq j$, $M_{ij} =\langle \bX_i, \bX_j \rangle \leq 0$, i.e.,~the features are anti-correlated. 
	
	\bigskip\noindent 
	We assume that once the features have been positively correlated with the output, they are anti-correlated. This assumption is a strong restriction to the class of studied problems and weakening it is left as an open problem. A major motivation for this assumption is that it implies that the trajectories of the DLN dynamics are nondecreasing.
	
	\begin{lem}
		\label{lem:nondecreasing}
		Assume \ref{it:ass-r}-\ref{it:ass-M}. There exists $\varepsilon_0 > 0$ such that for all $\varepsilon \in (0,\varepsilon_0]$, for all $i \in \{1, \dots, d\}$, $\theta_i^{(\varepsilon)}(t)$ is nondecreasing in $t$. 
	\end{lem}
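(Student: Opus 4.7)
The central observation is that \eqref{eq:GF-reparametrized} has the form $\dot \theta_i = \theta_i h_i$, where
\[
h_i(t) := r_i - \sum_j M_{ij}\, \theta_j^{(\varepsilon)}(t) = -\frac{\partial f}{\partial \theta_i}\bigl(\btheta^{(\varepsilon)}(t)\bigr)
\]
is minus the classical gradient. Solving this scalar equation along the trajectory yields $\theta_i^{(\varepsilon)}(t) = \theta_i^{(\varepsilon)}(0) \exp\!\bigl(\int_0^t h_i(s)\,\diff s\bigr) > 0$ for every $t$ in the interval of existence, since $\theta_i^{(\varepsilon)}(0) = C_i \varepsilon^{k_i} > 0$. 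Consequently $\dot \theta_i^{(\varepsilon)}$ has the same sign as $h_i$, and the lemma reduces to showing that $h_i(t) \geq 0$ for every $i$ and every $t \geq 0$, provided $\varepsilon$ is small enough.

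For the initialization, $h_i(0) = r_i - \sum_j M_{ij} C_j \varepsilon^{k_j} \to r_i$ as $\varepsilon \to 0$, and $r_i > 0$ by \ref{it:ass-r}. Hence one can pick $\varepsilon_0 > 0$ such that $h_i(0) > 0$ for every $i \in \{1, \dots, d\}$ and every $\varepsilon \in (0, \varepsilon_0]$.

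The key step is to propagate this nonnegativity along the dynamics. Differentiating $h_i$ and substituting $\dot \theta_j^{(\varepsilon)} = \theta_j^{(\varepsilon)} h_j$ gives the closed non-autonomous linear system
\[
\dot h_i = -\sum_j M_{ij}\, \theta_j^{(\varepsilon)}(t)\, h_j, \quad \text{i.e.,} \quad \dot h = A(t) h, \quad A_{ij}(t) := -M_{ij}\, \theta_j^{(\varepsilon)}(t).
\]
By \ref{it:ass-M} and the positivity of $\theta_j^{(\varepsilon)}$, the off-diagonal entries of $A(t)$ are all nonnegative, i.e., $A(t)$ is a Metzler matrix at every $t$. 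It is classical that the flow of such a cooperative linear ODE preserves the nonnegative orthant $\R^d_{\geq 0}$; one way to see it is to shift $A(t) \mapsto A(t) + \lambda \bI$ by a large enough $\lambda$ (on any compact time interval) so that the shifted matrix is entrywise nonnegative, and then to observe that the Peano--Baker series for the state-transition matrix of the shifted system is an absolutely convergent sum of entrywise-nonnegative matrices, hence itself entrywise nonnegative. Applied to $h(0) > \bzero$, this yields $h(t) \geq \bzero$ for all $t \geq 0$.

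I expect the cooperative/Metzler invariance step above to be the main obstacle, as it is the only non-computational part; the assumption \ref{it:ass-M} is precisely what makes it work, and removing this assumption would require new ideas. Once $h(t) \geq \bzero$ is established, combining it with $\theta_i^{(\varepsilon)}(t) > 0$ directly gives $\dot \theta_i^{(\varepsilon)}(t) = \theta_i^{(\varepsilon)}(t)\, h_i(t) \geq 0$, completing the proof of Lemma~\ref{lem:nondecreasing}.
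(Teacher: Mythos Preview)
Your proof is correct and arrives at the same invariance statement as the paper, but organized differently. The paper works directly in $\btheta$-space: it defines the polytope $Q=\{\btheta\geq\bzero:\;r_i-(\bM\btheta)_i\geq 0\ \forall i\}$ and applies Nagumo's theorem, checking that on each face $\{r_i-(\bM\btheta)_i=0\}$ the vector field points inward, which boils down to the inequality $-\sum_j M_{ij}\theta_j\bigl(r_j-(\bM\btheta)_j\bigr)\geq 0$. You instead pass to the auxiliary variable $h=\br-\bM\btheta$, obtain the closed linear system $\dot h=A(t)h$ with $A_{ij}(t)=-M_{ij}\theta_j^{(\varepsilon)}(t)$ Metzler, and invoke positivity preservation of cooperative linear flows. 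The key computation is literally the same (the paper's inward-pointing condition is $\dot h_i\geq 0$ on the face $h_i=0$), but your packaging avoids citing Nagumo's theorem and gives an explicit mechanism via the Peano--Baker series, which some readers may find more transparent. Conversely, the paper's formulation makes the invariant region $Q$ explicit from the start, which is convenient later in the paper. One small point you may want to make explicit: your Metzler argument is stated on compact time intervals, so concluding for all $t\geq 0$ requires global existence of $\btheta^{(\varepsilon)}$; this follows immediately from the Lyapunov property of $f$ (the paper's Lemma~\ref{lem:bounded}), or alternatively from the fact that once $h\geq\bzero$ is established the trajectory stays in the bounded set $Q$.
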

	The proof of this result is postponed to Appendix \ref{sec:proof-prop-nondecreasing}.
\end{enumerate}

As a side comment, note that Assumptions \ref{it:ass-r} and \ref{it:ass-M} jointly constrain the problem to be in the underparametrized regime $n \geq d$.

\begin{proposition}
	\label{prop:positive-definite}
	Assume \ref{it:ass-r}-\ref{it:ass-M}. Then $\bM = \bX^\top \bX$ is positive definite. In particular, as $\bM \in \R^{d \times d}$ and $\bX \in \R^{n\times d}$, we have $n \geq d$.
\end{proposition}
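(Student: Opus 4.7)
The plan is to show linear independence of the columns $\bX_1,\dots,\bX_d$; since $\bM = \bX^\top \bX$ is automatically positive semi-definite (as a Gram matrix), this will upgrade it to positive definite. The rank condition then forces $n \geq d$ because $\bM$ has rank $d$ and $\mathrm{rank}(\bM) \leq \mathrm{rank}(\bX) \leq \min(n,d)$.

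To prove linear independence, I would argue by contradiction: suppose there is a nontrivial relation $\sum_{i=1}^d \alpha_i \bX_i = \bzero$. Split the indices into $I = \{i : \alpha_i > 0\}$ and $J = \{i : \alpha_i < 0\}$ (ignoring the zero coordinates), and set
\begin{equation*}
\bv = \sum_{i \in I} \alpha_i \bX_i = \sum_{j \in J} (-\alpha_j) \bX_j \, .
\end{equation*}
Compute $\Vert \bv \Vert^2$ by pairing the two expressions:
\begin{equation*}
\Vert \bv \Vert^2 = \sum_{i \in I,\, j \in J} \alpha_i (-\alpha_j) \, \langle \bX_i, \bX_j \rangle = \sum_{i \in I,\, j \in J} \alpha_i (-\alpha_j) M_{ij} \, .
\end{equation*}
Each factor $\alpha_i$ and $-\alpha_j$ is positive, while $i \neq j$ forces $M_{ij} \leq 0$ by assumption \ref{it:ass-M}. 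Hence $\Vert \bv \Vert^2 \leq 0$, so $\bv = \bzero$, i.e., $\sum_{i \in I} \alpha_i \bX_i = \bzero$ and $\sum_{j \in J} (-\alpha_j) \bX_j = \bzero$ separately.

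Now I would invoke assumption \ref{it:ass-r}: taking the inner product of $\sum_{i \in I} \alpha_i \bX_i = \bzero$ with $\by$ gives $\sum_{i \in I} \alpha_i r_i = 0$. Since $\alpha_i > 0$ and $r_i > 0$ for every $i \in I$, this forces $I = \emptyset$; the same argument applied to $J$ gives $J = \emptyset$. Thus the original relation was trivial, a contradiction. This completes the proof; the main (and really only) subtlety is the splitting trick that converts the conditional non-positivity of off-diagonal entries of $\bM$ into a genuine vanishing statement, after which \ref{it:ass-r} closes the argument with no further work.
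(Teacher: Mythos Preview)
Your proof is correct and considerably more elementary than the paper's. The paper augments the problem by forming the $(d+1)\times(d+1)$ Gram matrix $\widetilde{\bM}$ of $(\bX_1,\dots,\bX_d,-\by)$, observes that under \ref{it:ass-r}-\ref{it:ass-M} all off-diagonal entries of $\widetilde{\bM}$ are non-positive with the last row/column strictly negative, and then applies Perron--Frobenius to $\mu\bI-\widetilde{\bM}$ to conclude that the smallest eigenvalue of $\widetilde{\bM}$ is simple with a strictly positive eigenvector; a short case analysis then rules out $\ker\bM\neq\{\bzero\}$. Your argument bypasses Perron--Frobenius entirely: the positive/negative splitting of a hypothetical kernel vector, combined with the Gram identity $\langle\bX_i,\bX_j\rangle=M_{ij}$, immediately forces the positive and negative parts to vanish separately, and then \ref{it:ass-r} finishes. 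The paper's route has the minor conceptual advantage of isolating exactly where irreducibility (via $r_i>0$) enters, but your approach is shorter, self-contained, and uses nothing beyond the sign assumptions and the Gram structure.
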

The proof of this result is postponed to Appendix \ref{sec:proof-prop-positive-definite}.

\subsection{Statement of the Results}\label{sec:statement} Our main result (Theorem \ref{thm:main} below) states that the DLN spends long periods of time in the vicinity of fixed points of \eqref{eq:GF-reparametrized}, and describes the times at which transitions occur. To start with, we describe this family of fixed points, using the notations introduced in Section \ref{sec:intro}.


\begin{proposition}
	\label{prop:fixed-points}
	Assume \ref{it:ass-r}-\ref{it:ass-M}. For all $I \subset \{1,\dots,d\}$, there exists a unique $\btheta \geq 0$ fixed point of \eqref{eq:GF-reparametrized} with support $\{ i \in \{1, \dots, d\} \, | \, \theta_i > 0\}$ equal to $I$. We denote this fixed point as $\btheta_*^{(I)}$. Its non-zero coordinates are $(\btheta_*^{(I)})_I = (\bM_{II})^{-1} \br_I$. There are thus $2^d$ fixed points of \eqref{eq:GF-reparametrized}. 
\end{proposition}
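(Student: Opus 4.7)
The plan is to first reduce the proposition to a linear algebra statement about the principal submatrix $\bM_{II}$, then handle existence/uniqueness via positive definiteness, and finally handle strict positivity using the anti-correlation assumption \ref{it:ass-M}.

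First I would unpack what "fixed point of \eqref{eq:GF-reparametrized} with support $I$" means. The right-hand side of \eqref{eq:GF-reparametrized} vanishes at $\btheta$ if and only if for each $i$, either $\theta_i = 0$ or $r_i - \sum_j M_{ij} \theta_j = 0$. Requiring that the support be exactly $I$ means $\theta_i = 0$ for $i \notin I$ and $\theta_i > 0$ for $i \in I$; combined with the previous dichotomy, this forces $\br_I = \bM_{II} \btheta_I$. So the proposition reduces to showing that the system $\bM_{II} \btheta_I = \br_I$ has a unique solution and that this solution is componentwise strictly positive.

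Existence and uniqueness of the solution come immediately from Proposition \ref{prop:positive-definite}: $\bM$ is positive definite, hence every principal submatrix $\bM_{II}$ is also positive definite and therefore invertible, yielding $(\btheta_*^{(I)})_I = (\bM_{II})^{-1} \br_I$. The total count of $2^d$ fixed points just comes from choosing $I \subset \{1,\dots,d\}$.

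The main obstacle is strict positivity. Here I would use a variational argument that leverages both \ref{it:ass-r} and \ref{it:ass-M}. Consider the strictly convex program
\begin{equation*}
	\bv^* = \argmin_{\bv \geq \bzero} g(\bv) \, , \qquad g(\bv) = \tfrac{1}{2} \bv^\top \bM_{II} \bv - \br_I^\top \bv \, ,
\end{equation*}
which admits a unique minimizer since $\bM_{II}$ is positive definite. The key claim is that $\bv^* > \bzero$. Suppose for contradiction that $v^*_k = 0$ for some $k \in I$. The first-order optimality condition on the non-negative orthant forces the directional derivative along $\bfone_k$ to be non-negative, i.e.~$(\bM_{II} \bv^*)_k - r_k \geq 0$. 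However, since $v^*_k = 0$, $M_{kj} \leq 0$ for $j \neq k$ by \ref{it:ass-M}, and $v^*_j \geq 0$, we have
\begin{equation*}
	(\bM_{II} \bv^*)_k = M_{kk} v^*_k + \sum_{j \in I, \, j \neq k} M_{kj} v^*_j \leq 0 \, ,
\end{equation*}
while $r_k > 0$ by \ref{it:ass-r}, a contradiction. So $\bv^* > \bzero$, which means the non-negativity constraint is inactive, and $\bv^*$ satisfies the unconstrained optimality condition $\bM_{II} \bv^* = \br_I$. Thus $(\bM_{II})^{-1}\br_I = \bv^* > \bzero$, finishing the proof. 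The hard part is really this last step: without \ref{it:ass-M}, $(\bM_{II})^{-1}\br_I$ can easily have negative components, which would mean that the fixed point with prescribed support $I$ simply does not exist in the non-negative orthant.
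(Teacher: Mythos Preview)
Your proof is correct but takes a genuinely different route for the key positivity step. Both you and the paper reduce the problem to showing $(\bM_{II})^{-1}\br_I > \bzero$, and both use that $\bM_{II}$ is positive definite (as a principal submatrix of $\bM$) for invertibility. The difference is in how strict positivity is obtained. The paper appeals to the theory of $\bK$-matrices (symmetric positive definite $\bZ$-matrices, i.e.\ Stieltjes matrices) from \citet{cottle2009linear}: since $\bM_{II}$ has non-positive off-diagonal entries and is positive definite, its inverse is \emph{entrywise non-negative}, so $(\bM_{II})^{-1}\br_I \geq \bzero$; strictness then follows because a zero component would force an entire row of $(\bM_{II})^{-1}$ to vanish, contradicting invertibility. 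Your variational argument is more elementary and self-contained: it bypasses the $\bK$-matrix machinery by checking the KKT conditions of the constrained quadratic program directly, using \ref{it:ass-M} only to sign the relevant partial derivative at the boundary. The paper's approach yields the extra structural fact that $(\bM_{II})^{-1}$ is entrywise non-negative (which it also uses elsewhere, e.g.\ for the antitonicity property in Appendix~\ref{sec:properties}); your approach avoids external references and would be preferable in a self-contained exposition.
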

The proof that $(\bM_{II})^{-1}$ exists and the proof of the proposition are postponed to Appendix~\ref{sec:proof-prop-fixed}. We give here a high-level intuition. For each $i \in \{1, \dots, d\}$, there are two ways of canceling out the right hand side of \eqref{eq:GF-reparametrized}: either $\theta_i = 0$ or $r_i - \sum_j M_{ij} \theta_j = 0$. For the fixed point $\btheta_*^{(I)}$, the set $I \subset \{1, \dots, d\}$ is the set of coordinates $i$ such that $\theta_{*,i}^{(I)} \neq 0$ and $r_i - \sum_j M_{ij} \theta_{*,j}^{(I)} = 0$; conversely for $i \notin I$, $\theta_{*,i}^{(I)} = 0$.
We say that the coordinates in $I$ are the \emph{active} coordinates of $\btheta_*^{(I)}$.

If no coordinate is active, we obtain the fixed point $\btheta_*^{(\emptyset)} = \bzero$ of \eqref{eq:GF-reparametrized}. If all coordinates are active, $\btheta_*^{(\{1,\dots,d\})} = \bM^{-1}\br$ is the minimum of $f$, thus a fixed point of both gradient flows \eqref{eq:GF-classical} and \eqref{eq:GF-reparametrized}. In Figure \ref{fig:vector-field}, we provide an illustration in dimension $d=2$. We show the vector field defined by the DLN dynamics \eqref{eq:GF-reparametrized}, its $2^d = 4$ fixed points enumerated above and the trajectories $\theta^{\varepsilon}(t)$ for different values of $\varepsilon$. 

\begin{figure}
	\begin{center}
		\includegraphics[width=0.6\linewidth]{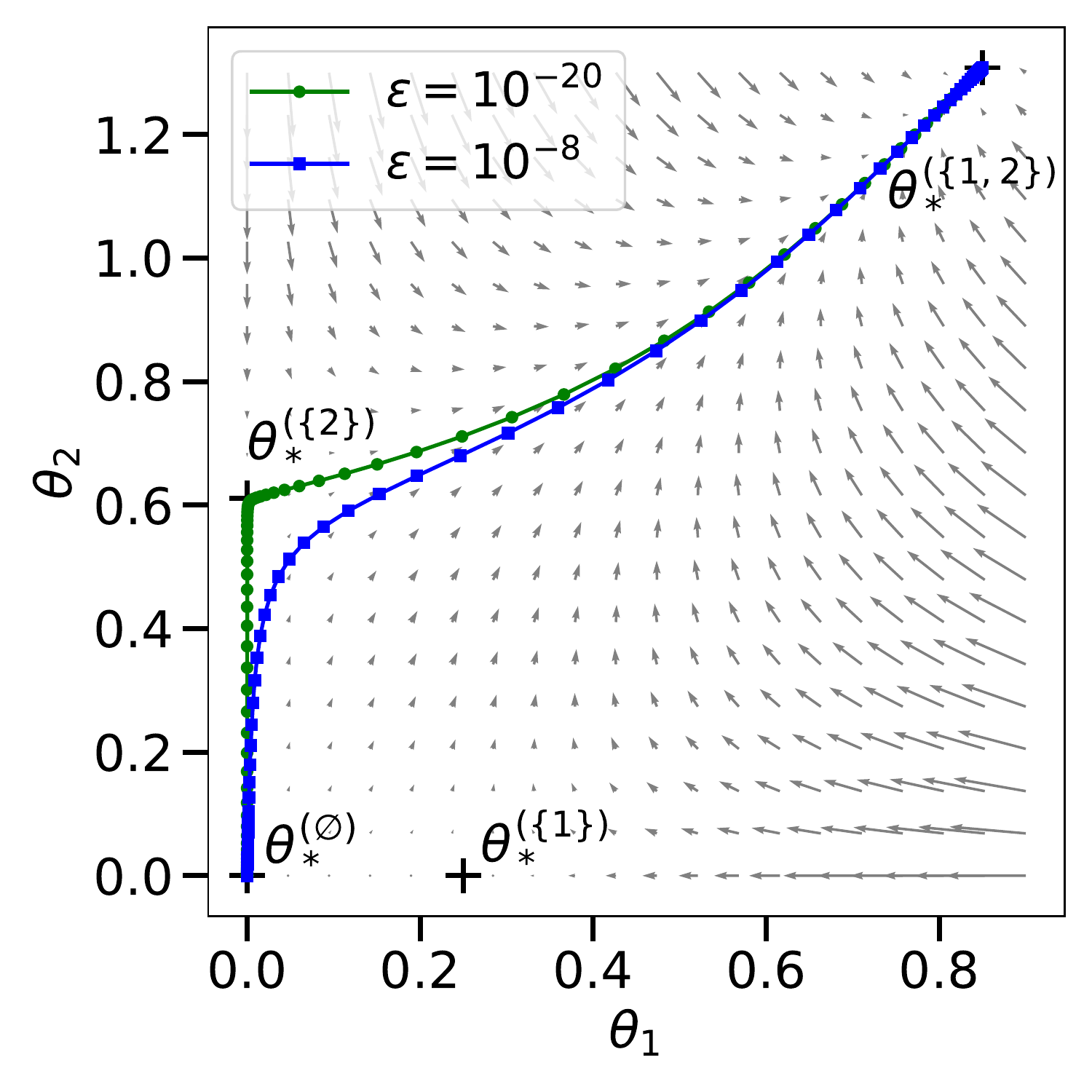}
	\end{center}
	\caption{In dimension $d=2$, we show the vector field $V(\theta_1, \theta_2) = (\theta_1(r_1 - A_{11}\theta_1 - A_{12}\theta_2), \theta_2(r_2 - A_{21}\theta_1 - A_{22}\theta_2))$ associated to the DLN dynamics \eqref{eq:GF-reparametrized} (gray arrows), its fixed points $\theta_*^{(I)}$ for $I \subset \{1,2\}$ (black crosses) and the trajectories of $\theta^{(\varepsilon)}(t)$ for $\varepsilon = 10^{-8}$ (blue) and $\varepsilon = 10^{-20}$ (green). In this simulation, $n = 3$ and the data $\bX \in \R^{n \times d}$, $\by \in \R^n$ is generated randomly with i.i.d.~standard Gaussian entries, conditionally on the event that Assumptions \ref{it:ass-r} and \ref{it:ass-M} hold. The initialization is $\btheta^{(\varepsilon)}(0) = (\varepsilon, \varepsilon)$.}
	\label{fig:vector-field}
\end{figure}

We are now in position to state our main theorem. Recall that for $\varepsilon > 0$, $\btheta^{(\varepsilon)} = \btheta^{(\varepsilon)}(t)$ is the solution of the DLN dynamics \eqref{eq:GF-reparametrized} initialized from $\btheta^{(\varepsilon)}(0) = (C_1 \varepsilon^{k_1}, \dots, C_d \varepsilon^{k_d})$, where $\bC = (C_1, \dots, C_d) > \bzero$ and  $\bk = (k_1, \dots, k_d) > \bzero$ are constants. 

\begin{thm}
	\label{thm:main}
	Assume \ref{it:ass-r}-\ref{it:ass-M}. For $s>0$, define $\bmu(s)$ as the unique minimizer of the regularized and constrained minimization problem 
	\begin{equation}
		\label{eq:main-optim-pb}
		\underset{\btheta \in \R^d, \, \btheta \geq \bzero}{\rm{min}.} \left\{ f(\btheta) + \frac{1}{s} \langle \bk, \btheta \rangle \right\} 
	\end{equation}
	and
	\begin{equation*}
		I(s) = \left\{ i \in \{1, \dots, d\} \, | \,  \mu_i(s) > 0 \right\} \, .
	\end{equation*}
	Then we have the following:
	\begin{enumerate}
		\item\label{it:main-1} The minimizer $\bmu(s)$ is nondecreasing in $s$, i.e., for all $i \in \{1,\dots,d\}$, $\mu_i(s)$ is nondecreasing in $s$. Consequently, $I(s)$ is nondecreasing in $s$ for the inclusion. 
		\item\label{it:main-2} Denote $s_1, \dots, s_q$ the points of discontinuity of the function $s \mapsto I(s)$. For all $s > 0$, $s \neq s_1, \dots, s_q$,
		\begin{equation*}
			\btheta^{(\varepsilon)}\left(s \log \frac{1}{\varepsilon}\right) \xrightarrow[\varepsilon \to 0]{} \btheta_*^{(I(s))} \, .
		\end{equation*}
		Moreover, the convergence is uniform for $s$ in compact subsets of $\R_{>0} \backslash \{s_1, \dots, s_q\}$.
		\item\label{it:main-3} For all $s>0$, 
		\begin{equation*}
			\frac{1}{s \log \frac{1}{\varepsilon}} \int_{0}^{s \log \frac{1}{\varepsilon}} \diff t \,  \btheta^{(\varepsilon)}(t) \xrightarrow[\varepsilon \to 0]{} \bmu(s) \, .
		\end{equation*}
		Moreover, the convergence is uniform for $s$ in compact subsets of $\R_{>0}$.
	\end{enumerate}
\end{thm}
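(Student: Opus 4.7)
\medskip

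\noindent \textbf{Part~\ref{it:main-1}.} The objective in \eqref{eq:main-optim-pb} is strictly convex by Proposition~\ref{prop:positive-definite}, so $\bmu(s)$ is well defined, and its KKT conditions read, for every $i$: either $\mu_i(s) = 0$ with $r_i - (\bM\bmu(s))_i \leq k_i/s$, or $\mu_i(s) > 0$ with $(\bM\bmu(s))_i = r_i - k_i/s$. Under \ref{it:ass-M} and Proposition~\ref{prop:positive-definite}, every principal submatrix $\bM_{II}$ is a positive definite M-matrix, so $\bM_{II}^{-1}$ has non-negative entries. On each open interval of constancy of $I = I(s)$ one then reads $\bmu(s)_I = \bM_{II}^{-1}(\br_I - \bk_I/s)$, which is manifestly nondecreasing in $s$; and strict convexity yields continuity of $s \mapsto \bmu(s)$ across the finitely many transitions, where any newly activated coordinate enters at value $0$. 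No already-active coordinate can therefore decrease across a transition, which gives Part~\ref{it:main-1}.

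\medskip

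\noindent \textbf{Part~\ref{it:main-2}.} This is the heart of the theorem and what I expect to be the main obstacle. I would proceed by induction on the finite chain $\emptyset = I_0 \subsetneq I_1 \subsetneq \dots \subsetneq I_q$ produced in Part~\ref{it:main-1}, with transition times $0 = s_0 < s_1 < \dots < s_q$. Suppose inductively that $\btheta^{(\varepsilon)}(s\log(1/\varepsilon)) \to \btheta_*^{(I_j)}$ at some $s \in (s_j, s_{j+1})$. Linearizing \eqref{eq:GF-reparametrized} at $\btheta_*^{(I_j)}$ splits the dynamics into two regimes: the linearization on the active block is strictly stable (since $\bM_{I_jI_j}$ is positive definite), so the active coordinates relax exponentially to $\bM_{I_jI_j}^{-1}\br_{I_j}$; and for $i \notin I_j$, $\dot\theta_i \approx c_i^{(j)}\theta_i$ with $c_i^{(j)} = r_i - (\bM\btheta_*^{(I_j)})_i$, where \ref{it:ass-M} together with $\btheta_*^{(I_j)} \geq \bzero$ forces $c_i^{(j)} \geq r_i > 0$, so inactive coordinates grow exponentially at explicit positive rates. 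Iterating, each inactive $\theta_i^{(\varepsilon)}$ stays on a geometric scale $\varepsilon^{\alpha_i(s)}$ where $\alpha_i$ is a positive, piecewise-affine function of $s = t/\log(1/\varepsilon)$ with slope $-c_i^{(j)}$ on $(s_j, s_{j+1})$; the first coordinate whose exponent $\alpha_i$ reaches $0$ pinpoints the next transition, and a direct algebraic match with the KKT system from Part~\ref{it:main-1} identifies the new saddle as $\btheta_*^{(I_{j+1})}$. The delicate point will be a simultaneous, uniform quantitative control of the fast contraction on the active block and the slow exponential growth on the inactive block; I would handle this via nested time windows, Lemma~\ref{lem:nondecreasing} (to rule out spurious decreases), and a Lyapunov argument (e.g.~Bregman divergence of $f$) on the active block.

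\medskip

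\noindent \textbf{Part~\ref{it:main-3}.} This follows from Part~\ref{it:main-2} via a short identity. Differentiating $s\bmu(s)_I = \bM_{II}^{-1}(s\br_I - \bk_I)$ on each interval of constancy of $I = I(s)$ yields
\begin{equation*}
    \frac{\diff}{\diff s}\bigl(s\bmu(s)\bigr) = \btheta_*^{(I(s))},
\end{equation*}
and since $s\bmu(s) \to \bzero$ as $s \to 0^+$, integration gives $\bmu(s) = \frac{1}{s}\int_0^s \btheta_*^{(I(s'))}\diff s'$. After the change of variables $t = s'\log(1/\varepsilon)$, the average in Part~\ref{it:main-3} rewrites as $\frac{1}{s}\int_0^s \btheta^{(\varepsilon)}(s'\log(1/\varepsilon))\diff s'$, whose integrand is uniformly bounded in $\varepsilon, s'$ by $\btheta_*^{(\{1,\dots,d\})} = \bM^{-1}\br$ (by Lemma~\ref{lem:nondecreasing} and the convergence of the DLN to the unconstrained minimum of $f$ when started positive) and converges pointwise to $\btheta_*^{(I(s'))}$ for almost every $s'$ by Part~\ref{it:main-2}. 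Dominated convergence gives the pointwise limit, upgraded to uniformity on compact subsets of $\R_{>0}$ because the approximating function of $s$ has a derivative uniformly bounded in $\varepsilon$.
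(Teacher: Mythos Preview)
Your plan orders the argument as $(1)\Rightarrow(2)\Rightarrow(3)$, with (2) by inductive saddle-to-saddle tracking and (3) by dominated convergence from (2). The paper inverts this: it essentially proves (3) first and deduces (2). Concretely, it sets $w_i^{(\varepsilon)}=\log\theta_i^{(\varepsilon)}/\log\varepsilon$ (your exponents $\alpha_i$) and $\bz^{(\varepsilon)}(s)=\int_0^s\btheta^{(\varepsilon)}$, notes the exact identity $\bw^{(\varepsilon)}(s)-\bk+s\br-\bM\bz^{(\varepsilon)}(s)=o(1)$, and shows via Arzel\`a--Ascoli that any subsequential limit $(\bw,\bz)$ solves the linear complementarity problem $\bw=\bk-s\br+\bM\bz$, $\bw,\bz\geq\bzero$, $\bw^\top\bz=0$. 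Uniqueness of the LCP solution forces convergence, and the identification $\bz(s)=s\bmu(s)$ gives (3) immediately. Part (2) is then obtained from (3) and the convergence $\bw^{(\varepsilon)}\to\bw$ via the Lotka--Volterra Lyapunov function $V(\btheta)=\sum_{i\in I}(\theta_i-\theta_{*,i}^{(I)}\log\theta_i)$, with no linearization and no induction on the chain of saddles.

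The payoff is that the paper never faces the ``simultaneous uniform quantitative control'' during transitions that you correctly flag as the main obstacle in your Part~(2): the compactness/LCP argument is global in $s$ and handles all transitions (including simultaneous activations) at once, and the Lyapunov step converts the integral information from (3) into pointwise convergence without ever analyzing a single jump. Your inductive route is the more intuitive one and your identity $\tfrac{\diff}{\diff s}(s\bmu(s))=\btheta_*^{(I(s))}$ in Part~(3) is correct, but the hard transition estimates you would need for (2) are precisely what you have left as a plan, and precisely what the paper's method sidesteps. For Part~(1), your direct M-matrix computation is fine (modulo a slight circularity in assuming piecewise constancy of $I(s)$ before monotonicity is established); the paper instead invokes the antitonicity of the LCP solution in its linear parameter.
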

This theorem is proved in Section \ref{sec:proof-thm-main}. It states that $\btheta^{(\varepsilon)}\left(s \log \frac{1}{\varepsilon}\right)$ converges to a piecewise constant function, taking values at the fixed points of the DLN dynamics~\eqref{eq:GF-reparametrized}. Moreover, the set $I(s)$ of active coordinates of the limit is nondecreasing, showing that there are successive coordinate activations. 

\begin{figure}
	\begin{subfigure}{0.48\linewidth}
		\includegraphics[width=\linewidth]{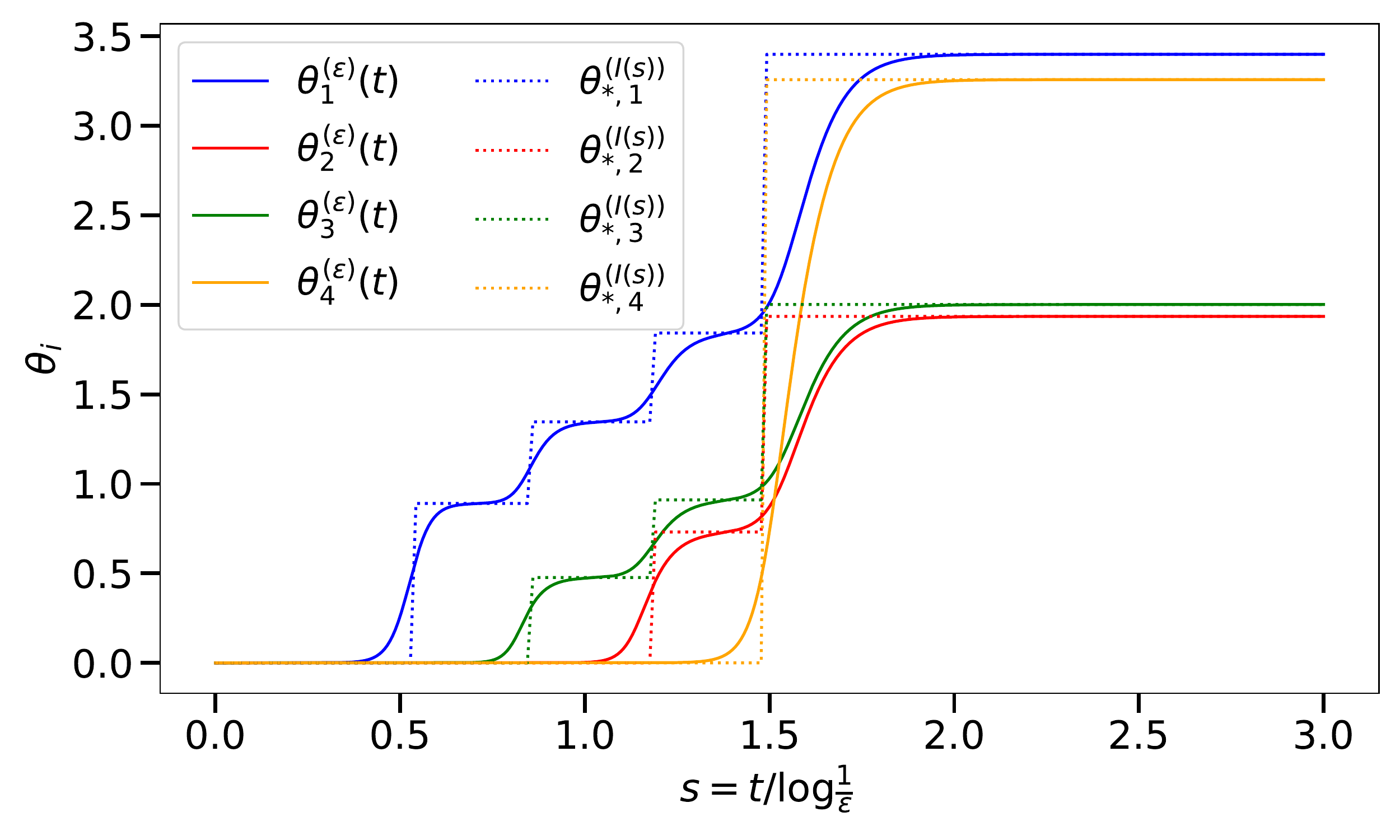}
		\includegraphics[width=\linewidth]{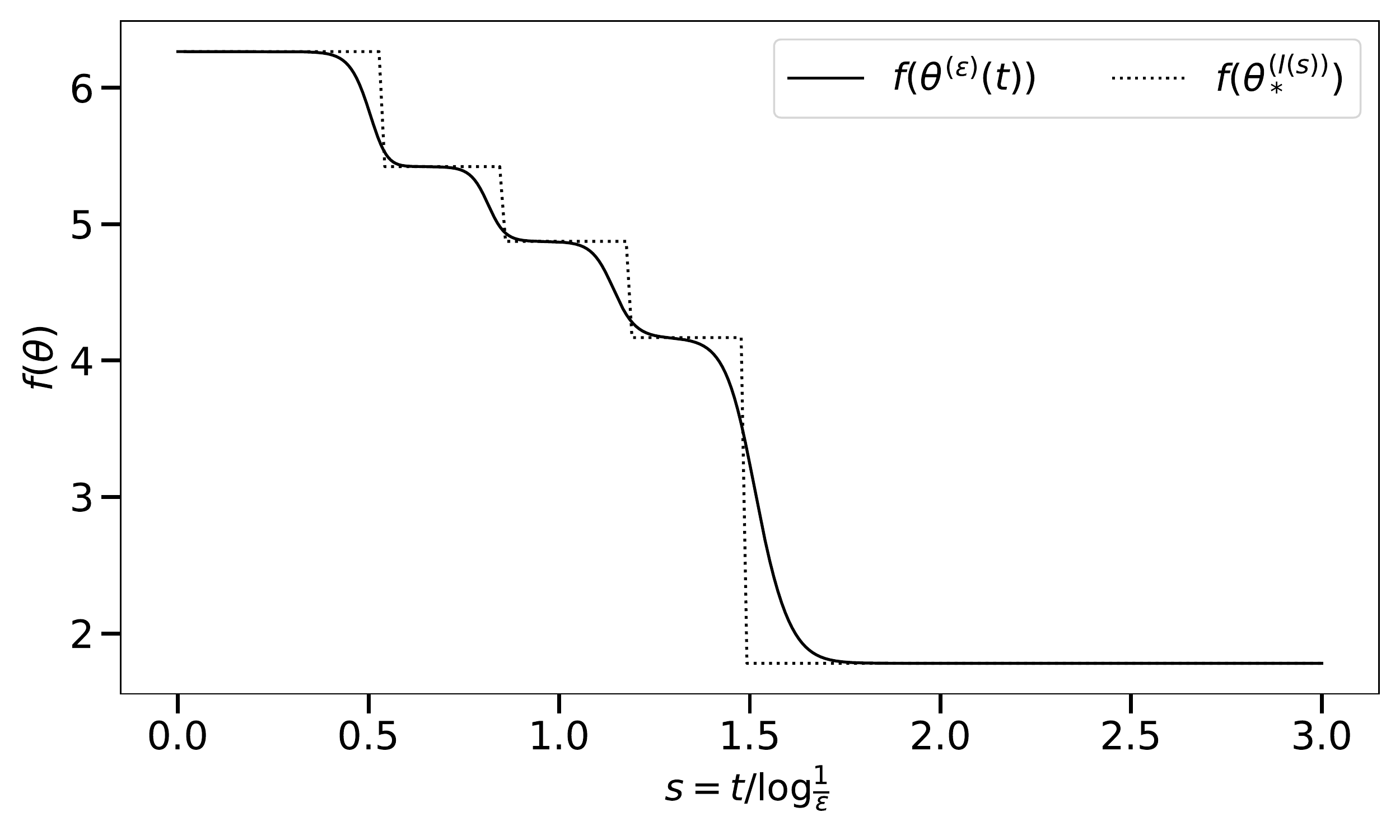}
		\caption{$\varepsilon = 10^{-8}$}
	\end{subfigure}
	\begin{subfigure}{0.48\linewidth}
		\includegraphics[width=\linewidth]{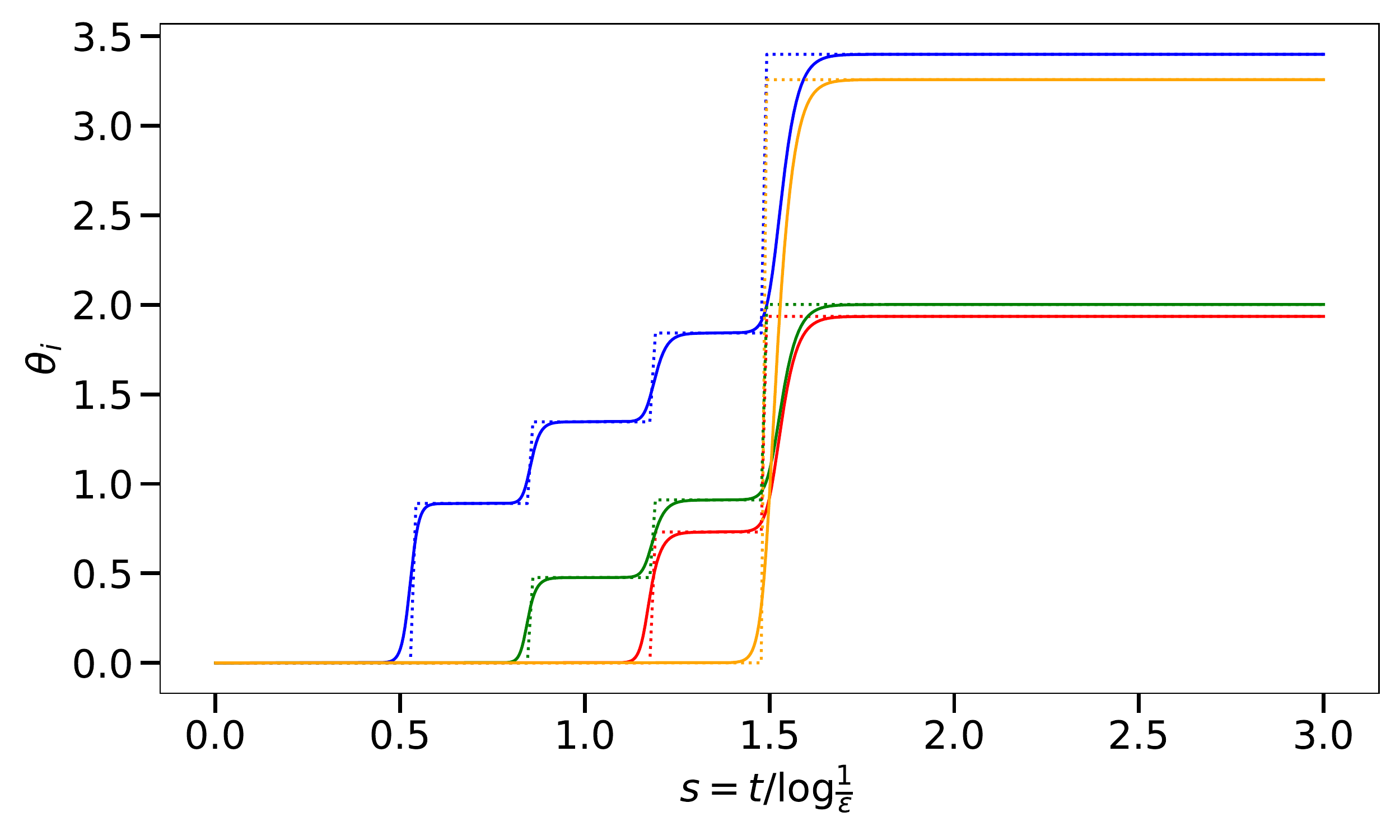}
		\includegraphics[width=\linewidth]{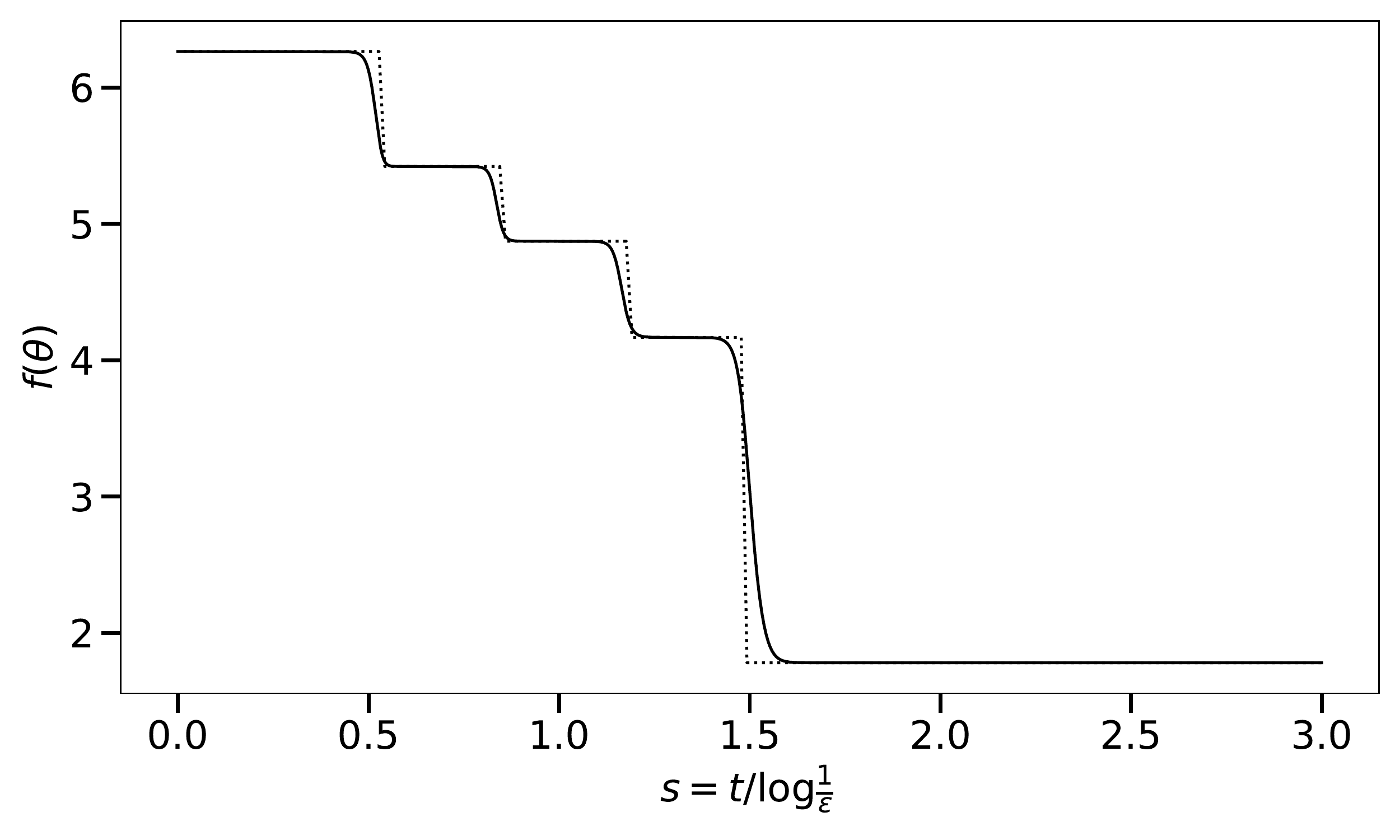}
		\caption{$\varepsilon = 10^{-20}$}
	\end{subfigure}
	\caption{Comparison between the coordinates $\theta^{(\varepsilon)}_i\left(s \log \frac{1}{\varepsilon}\right)$ and their asymptotic approximation $\theta_{*,i}^{(I(s))}$ (upper plots) and between the losses $f\left(\btheta^{(\varepsilon)}\left(s \log \frac{1}{\varepsilon}\right)\right)$ and $f(\btheta_*^{(I(s))})$ (lower plots). The simulations are run with $\varepsilon = 10^{-8}$ (left plots) and $\varepsilon = 10^{-20}$ (right plots). In this simulation, $n = 5$, $d=4$ and the data $\bX \in \R^{n \times d}$, $\by \in \R^n$ is generated randomly with i.i.d.~standard Gaussian entries, conditionally on the event that Assumptions \ref{it:ass-r} and \ref{it:ass-M} hold. The initialization is $\btheta^{(\varepsilon)}(0) = (\varepsilon, \dots, \varepsilon)$ and thus $\bk = (1, \dots, 1)$.}
	\label{fig:main}
\end{figure}

We provide an illustration of these successive coordinate activations in Figure \ref{fig:main}. Note that when a new coordinate is activated, all other coordinates are perturbed. Moreover, as $\varepsilon$ decreases from $10^{-8}$ to $10^{-20}$, one can observe that $\btheta_*^{(I(s))}$ is becoming a sharper approximation of $\btheta^{(\varepsilon)}\left(s \log \frac{1}{\varepsilon}\right)$ and $f(\btheta_*^{(I(s))})$ is becoming a sharper approximation of $f\left(\btheta^{(\varepsilon)}\left(s \log \frac{1}{\varepsilon}\right)\right)$.

The set $I(s)$ of active coordinates of the limit is obtained by solving a regularized and constrained version \eqref{eq:main-optim-pb} of the original optimization problem \eqref{eq:optim-pb}. The non-active constraints at the optimum $\bmu(s)$ correspond to the active coordinates of $\btheta_*^{(I(s))}$. 

The regularization term $+\frac{1}{s} \langle \bk, \btheta \rangle$ in \eqref{eq:main-optim-pb} has a decreasing sparse regularizing effect. The author did not find a finer high-level motivation to explain why $I(s)$ should be defined through \eqref{eq:main-optim-pb}; his insights come only from the proof of the theorem. However, Theorem \ref{thm:main}.\eqref{it:main-3} states a second relation between the DLN dynamics \eqref{eq:GF-reparametrized} and the optimization problem~\eqref{eq:main-optim-pb}: the average $\frac{1}{s \log \frac{1}{\varepsilon}} \int_{0}^{s \log \frac{1}{\varepsilon}} \diff t \,  \btheta^{(\varepsilon)}(t)$ of the trajectory converges to the minimizer $\bmu(s)$ as $\varepsilon\to 0$; said differently, the average of the trajectory computes the regularization path \cite[Section 3]{hastie2009elements} of the regularized optimization problem \eqref{eq:main-optim-pb}.

\bigskip
\begin{remark}
	In Theorem \ref{thm:main}.\eqref{it:main-2}, it is not possible to have uniform convergence in neighborhoods of $s_1, \dots, s_q$ as the functions $\btheta^{(\varepsilon)}\left(s \log \frac{1}{\varepsilon}\right)$ are continuous while $\btheta_*^{(I(s))}$ is discontinuous at $s_1, \dots, s_q$; a uniform convergence would contradict the Arzel\`a theorem \cite[Section IV.6]{dunford1988linear}. 
\end{remark}

The definition of the asymptotic process $\btheta_*^{(I(s))}$ is rather complex as one has to solve an optimization problem. Nevertheless, in the following corollary, we show that it is still possible to deduce a simple expression for the convergence time of $\btheta^{(\varepsilon)}(t)$ to the minimizer $\btheta_*^{(\{1,\dots,d\})}$ of $f$.

\begin{coro}[convergence time to the minimizer] 
	\label{coro:main}
	Assume \ref{it:ass-r}-\ref{it:ass-M}. For all $\eta>0$, denote 
	\begin{equation*}
		\tau_\eta^{(\varepsilon)} = \inf \left\{ t \geq 0 \, \middle\vert \, \left\Vert \btheta^{(\varepsilon)}(t) - \btheta_*^{(\{1,\dots,d\})} \right\Vert \leq \eta \right\}
	\end{equation*}
	the hitting time of the ball centered around the minimizer $\btheta_*^{(\{1,\dots,d\})}$ of $f$ and of radius~$\eta$. Then for $\eta$ small enough, 
	\begin{equation*}
		\frac{\tau_\eta^{(\varepsilon)}}{\log \frac{1}{\varepsilon}} \xrightarrow[\varepsilon \to 0]{} \underset{i\in\{1,\dots,d\}}{\max} \frac{(\bM^{-1}\bk)_i}{(\bM^{-1}\br)_i}  \, .
	\end{equation*}
\end{coro}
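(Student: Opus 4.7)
The plan is to reduce the corollary to Theorem~\ref{thm:main} by identifying $\max_i (\bM^{-1}\bk)_i/(\bM^{-1}\br)_i$ with the last transition time $s_q$---i.e.~the smallest $s$ beyond which $I(s) = \{1,\dots,d\}$. Writing $f$ in quadratic form, the KKT conditions for \eqref{eq:main-optim-pb} read $\bM\bmu(s) - \br + \bk/s = \bnu$ with $\bnu \geq \bzero$ and $\nu_i \mu_i(s) = 0$. The full support $I(s) = \{1,\dots,d\}$ corresponds to $\bnu = \bzero$ and $\bmu(s) = \bM^{-1}\br - \bM^{-1}\bk/s > \bzero$. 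Since $\bM^{-1}\br = \btheta_*^{(\{1,\dots,d\})}$ has strictly positive coordinates by Proposition~\ref{prop:fixed-points}, this condition amounts to $s > (\bM^{-1}\bk)_i / (\bM^{-1}\br)_i$ for every $i$; combined with the monotonicity of $I(s)$ (Theorem~\ref{thm:main}.\eqref{it:main-1}) this gives $s_q = \max_i (\bM^{-1}\bk)_i/(\bM^{-1}\br)_i$.

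For the upper bound, fix any $s > s_q$ with $s \notin \{s_1, \dots, s_q\}$: then $\btheta_*^{(I(s))} = \btheta_*^{(\{1,\dots,d\})}$, so Theorem~\ref{thm:main}.\eqref{it:main-2} ensures $\|\btheta^{(\varepsilon)}(s\log(1/\varepsilon)) - \btheta_*^{(\{1,\dots,d\})}\| < \eta$ for small $\varepsilon$, hence $\tau_\eta^{(\varepsilon)} \leq s\log(1/\varepsilon)$, and letting $s \downarrow s_q$ yields $\limsup \tau_\eta^{(\varepsilon)}/\log(1/\varepsilon) \leq s_q$. For the lower bound, fix any $s \in (s_{q-1}, s_q)$ with the convention $s_0 = 0$; since $I(s) \subsetneq \{1,\dots,d\}$, pick $i_0 \notin I(s)$. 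Theorem~\ref{thm:main}.\eqref{it:main-2} gives $\theta_{i_0}^{(\varepsilon)}(s\log(1/\varepsilon)) \to (\btheta_*^{(I(s))})_{i_0} = 0$, and Lemma~\ref{lem:nondecreasing} propagates this bound to every $t \in [0, s\log(1/\varepsilon)]$. For $\eta < (\bM^{-1}\br)_{i_0}$ and $\varepsilon$ small,
\[
\|\btheta^{(\varepsilon)}(t) - \btheta_*^{(\{1,\dots,d\})}\| \geq (\bM^{-1}\br)_{i_0} - \theta_{i_0}^{(\varepsilon)}(t) > \eta
\]
throughout $[0, s\log(1/\varepsilon)]$, hence $\tau_\eta^{(\varepsilon)} > s\log(1/\varepsilon)$; letting $s \uparrow s_q$ concludes.

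Given Theorem~\ref{thm:main} and Lemma~\ref{lem:nondecreasing}, the argument is largely routine, and the main conceptual work is the KKT-based identification of $s_q$ in the first paragraph. The bookkeeping point that must be checked is that the stated maximum is a well-defined positive number: strict positivity of $\bM^{-1}\br$ comes from Proposition~\ref{prop:fixed-points}, while strict positivity of $(\bM^{-1}\bk)_i$ uses that under \ref{it:ass-M} and Proposition~\ref{prop:positive-definite} the matrix $\bM$ is a Stieltjes matrix, whose inverse has non-negative entries with positive diagonal, combined with $\bk > \bzero$.
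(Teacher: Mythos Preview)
Your proof is correct and follows essentially the same route as the paper's: identify the threshold $s_* = \max_i (\bM^{-1}\bk)_i/(\bM^{-1}\br)_i$ as the value beyond which $I(s)=\{1,\dots,d\}$, then combine Theorem~\ref{thm:main}.\eqref{it:main-2} with Lemma~\ref{lem:nondecreasing} for matching upper and lower bounds on $\tau_\eta^{(\varepsilon)}/\log(1/\varepsilon)$. Your treatment is in fact slightly more careful than the paper's in one respect: you explicitly justify $s_*>0$ via the Stieltjes property of $\bM$, which the paper leaves implicit. One minor presentational point: in the lower bound you write ``fix $s\in(s_{q-1},s_q)$ \dots pick $i_0\notin I(s)$'' and then impose $\eta<(\bM^{-1}\br)_{i_0}$; since $I(\cdot)$ is constant on that interval $i_0$ is in fact independent of $s$, so the $\eta$-threshold is well defined, but it would read more cleanly to choose $i_0$ once before varying $s$.
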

This corollary is proved in Section \ref{sec:proof-coro-main}. Note that we describe the hitting time $\tau_\eta^{(\varepsilon)}$ only in the asymptotic limit $\varepsilon \to 0$, and in this limit, $\tau_\eta^{(\varepsilon)}/\log \frac{1}{\varepsilon}$ is independent of $\eta$ (for $\eta$ small enough). This surprising property is due to the fact that the limit trajectory reaches the global minimizer $\btheta_*^{(\{1,\dots,d\})}$ through a last jump of the iterates (see Figure \ref{fig:main}) and the duration of this jump is negligible before $\log \frac{1}{\varepsilon}$.

\section{Related Work}
\label{sec:related}

\noindent
\emph{Diagonal linear networks (DLNs).} Previous studies of DLNs show that their dynamics select sparse estimators in an overparametrized setting \citep{vaskevicius2019implicit,zhao2019implicit,woodworth2020kernel,haochen2021shape,li2021implicit,azulay2021implicit,pesme2021implicit,vivien2022label,nacson2022implicit,chou2021more}. Our work differs from previous studies in two ways: first, we describe the limit of the full trajectory of the dynamics, but second, we are technically restricted to the underparametrized setting. 

Many studies consider the more general quadratic reparametrization $\theta_i = (u_i^2 - v_i^2)/4$ or equivalently $\theta_i = u_i v_i$, which do not constrain $\theta_i$ to be non-negative, while our reparametrization $\theta_i = u_i^2/4$ does. However, under our Assumptions \ref{it:ass-r} and \ref{it:ass-M}, the restriction to non-negative regressors is benign. Heuristically, the regressors $\theta_i$ are nondecreasing (Lemma~\ref{lem:nondecreasing}), thus they do not ``try'' to become negative. We thus claim that under the more general parametrization $\theta_i = (u_i^2 - v_i^2)/4$, the variables $v_i$ would remain negligible and the results would be the same. 

When $\bM = \bX^\top \bX =  \bI_d$, the DLNs dynamics \eqref{eq:GF-reparametrized} are separable across coordinates and can be solved using the logistic equation. In this special case, the activation of a coordinate does not affect the other coordinates. Further, one can check that the coordinates are activated in the decreasing order of the loss decrease that they induce. In \citep{vaskevicius2019implicit,zhao2019implicit,li2021implicit}, a restricted isometry property or incoherence property controls the deviation from this special case. On the contrary, in this paper, we do not make such an assumption and observe richer phenomena. In Figure~\ref{fig:main}, we observe each coordinate activation has a large influence on other active coordinates; moreover, the coordinate introduced last is the second largest coordinate of the optimum $\btheta_*^{(\{1,\dots,4\})}$ and induces the largest loss decrease.

To the best of our knowledge, previous analyses of DLNs have focused on the case where the initializations $\theta_i^{(\varepsilon)}(0) = C_i\varepsilon^{k_i}$ of all coordinates have the same order of magnitude, i.e., $\bk = (1, \dots, 1)$. In this paper, we generalize to $\bk \neq (1, \dots, 1)$: this has the effect of weighting the sparse regularizing term of \eqref{eq:main-optim-pb}.

We believe that the techniques of this paper can be adapted to deeper DLNs, i.e., when $\theta_i \propto u_i^l$, $l > 2$. One would only need to assume additionally that $k_1 = \dots = k_d$. In this case, the time rescaling to have a limiting trajectory would change from $\log 1/\varepsilon$ for $l=2$ to $\varepsilon^{2/l-1}$ for $l > 2$. Moreover, in this latter case, one observes that the effective regularization in Theorem \ref{thm:main} depends on the constants $C_1, \dots, C_d$ (but is still linear in $\btheta$). This is observed by repeating the proof of Section \ref{sec:proof-thm-main}, redefining $w_i^{(\varepsilon)}$ as $(\theta_i/\varepsilon)^{2/l-1}$. We have omitted this adaptation for simplicity.

Finally, we note that when a $\ell^2$ penalization on $\bu$ (or on $\bu$ and $\bv$) is added to $f$, DLNs are related to iterative reweighted least-squares, a reparametrization of the Lasso problem appreciated for computational purposes, see \citep[Section 5]{bach2012optimization} or \citep{poon2021smooth}. However, in this paper, there is no explicit $\ell^2$ penalty on $\bu$ and thus no explicit $\ell^1$ penality on $\btheta$.

\medskip\noindent
\emph{Incremental learning.} Incremental learning describes some learning curves observed in human and machine learning that are almost piecewise constant: they consist of stages where little progress is made, separated by sharp transitions. For instance, this phenomenon occurs in non-diagonal linear networks \citep{saxe2019mathematical,gidel2019implicit,gissin2019implicit,arora2019implicit,chou2020gradient,li2020towards}, in tensor decomposition \citep{ge2021understanding, razin2021implicit,razin2022implicit,hariz2022implicit} and in shallow ReLU networks \citep{boursier2022gradient}. In general, obtaining a mathematical description of the process---of the times of the transitions and the progress made---is mathematically challenging. To the best of our knowledge, existing works obtain a rigorous and complete mathematical description only in ``separable'' cases where the learning dynamics can be separated into several one-dimensional learning dynamics. For instance, \cite{gissin2019implicit} study DLNs but only in the special case $\bM = \bI_d$. As a consequence, a major contribution of our work is to describe precisely some \emph{non-separable} incremental learning dynamics.

\medskip\noindent
\emph{Heteroclinic networks.} From a dynamical systems perspective, the dynamics \eqref{eq:GF-reparametrized} form a heteroclinic network \citep{bakhtin2011noisy}: it has several fixed points (also called \emph{saddle points}) $(\btheta_*^{(I)})_{I \subset \{1,\dots,d\}}$ connected by geodesics of the flow. Such a dynamical system spends large amounts of time in the vicinity of fixed points, with sharp transitions between them. In our case, this is closely related to incremental learning. For our dynamical system, we describe the sequence of visited fixed points and the transition times. The paper of \citet{jacot2021deep} attempted a similar study for linear networks; we prove rigorously such results in the special case of \emph{diagonal} linear networks.

\medskip\noindent
\emph{Lotka--Volterra equations.} To finish, we note that the quadratic system \eqref{eq:GF-reparametrized} of ordinary differential equations are Lotka--Volterra (LV) equations \citep{hofbauer_sigmund_1998,baigent2017lotka}. Traditionally, in mathematical biology, these equations represent the evolution the populations sizes $\theta_1, \dots, \theta_d$ of $d$~interacting species. The parameter $r_i$ represents the intrinsic growth of population $i$ while the parameter $M_{ij}$ represents the interaction between populations $i$ and $j$. 

This point of view, and in particular the paper of \citet{goh1979stability}, inspired the author to use the function \eqref{eq:lyapunov-LV} in the proof of Theorem \ref{thm:main}. In general, our paper can be interpreted as a study of LV equations for cooperative and symmetric interactions from infinitesimal initial population sizes. To the best of our knowledge, such a study did not exist in the literature on LV equations; its implications will be the subject of a forthcoming paper. 

\section{Proof of Theorem \ref{thm:main}}
\label{sec:proof-thm-main}

In this proof, we use both time variables $t$ and $s$, with $t = s \log \frac{1}{\varepsilon}$. As it is frequent in the literature on ordinary differential equations (ODEs), we abusively use the same notation for functions of $t$ and $s$. For instance, by convention, $\btheta^{(\varepsilon)}(s) := \btheta^{(\varepsilon)}(t)$ with $t = s \log \frac{1}{\varepsilon}$. In fact, we often drop the dependence on time. For instance, $\btheta^{(\varepsilon)} := \btheta^{(\varepsilon)}(s) = \btheta^{(\varepsilon)}(t)$. 

We start with a crude estimate of the trajectories $\btheta^{(\varepsilon)}(t)$ that is useful several times later in the proof. 

\begin{lem}
	\label{lem:bounded}
	The trajectory $\btheta^{(\varepsilon)}(t)$ is bounded uniformly for $\varepsilon \in (0,1]$ and $t \in \R_{\geq 0}$, i.e.,~there exists a constant $B > 0$ such that $\forall \varepsilon\in (0,1], \forall t \in \R_{\geq 0}, \Vert \btheta^{(\varepsilon)}(t)\Vert \leq B$.
\end{lem}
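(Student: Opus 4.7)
The plan is to use the objective $f$ itself as a Lyapunov function, exploiting the positive definiteness of $\bM$ to turn a bound on $f$ into a bound on $\Vert \btheta^{(\varepsilon)} \Vert$.

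First, I would observe that the non-negative orthant is forward invariant under \eqref{eq:GF-reparametrized}. The right-hand side of the dynamics vanishes on each coordinate hyperplane $\{\theta_i = 0\}$, so these hyperplanes are invariant. Since the right-hand side is polynomial (hence locally Lipschitz), Picard--Lindelöf applies, and by uniqueness a trajectory starting from $\btheta^{(\varepsilon)}(0) > \bzero$ cannot reach $\{\theta_i = 0\}$ without being the constant-zero trajectory in that coordinate. Hence $\btheta^{(\varepsilon)}(t) > \bzero$ for all $t$ in the maximal interval of existence.

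Next, differentiating $f$ along the flow gives
\begin{equation*}
	\frac{\diff}{\diff t} f(\btheta^{(\varepsilon)}(t)) = \sum_{i=1}^d \frac{\partial f}{\partial \theta_i}(\btheta^{(\varepsilon)}) \cdot \frac{\diff \theta_i^{(\varepsilon)}}{\diff t} = - \sum_{i=1}^d \theta_i^{(\varepsilon)} \left( \frac{\partial f}{\partial \theta_i}(\btheta^{(\varepsilon)}) \right)^2 \leq 0,
\end{equation*}
using the non-negativity just established. Thus $f(\btheta^{(\varepsilon)}(t)) \leq f(\btheta^{(\varepsilon)}(0))$ for all $t$. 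Since $\btheta^{(\varepsilon)}(0) = (C_1 \varepsilon^{k_1}, \dots, C_d \varepsilon^{k_d})$ with $k_i > 0$ and $\varepsilon \in (0,1]$, the initial point lies in the compact box $[0, C_1] \times \dots \times [0, C_d]$, so there is a constant $F$ with $f(\btheta^{(\varepsilon)}(0)) \leq F$ for all $\varepsilon \in (0,1]$.

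Finally, by Proposition \ref{prop:positive-definite}, $\bM$ is positive definite; let $\lambda > 0$ be its smallest eigenvalue. Then
\begin{equation*}
	f(\btheta) = \tfrac{1}{2}\Vert \by \Vert^2 - \langle \br, \btheta \rangle + \tfrac{1}{2} \langle \btheta, \bM \btheta \rangle \geq \tfrac{\lambda}{2} \Vert \btheta \Vert^2 - \Vert \br \Vert \, \Vert \btheta \Vert + \tfrac{1}{2}\Vert \by \Vert^2,
\end{equation*}
so $f$ is coercive and the sublevel set $\{\btheta : f(\btheta) \leq F\}$ is bounded, say by $B$. This provides an a priori bound $\Vert \btheta^{(\varepsilon)}(t) \Vert \leq B$, which both yields the uniform bound claimed and ensures (by the standard extension argument for ODEs) that the maximal interval of existence is all of $\R_{\geq 0}$. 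There is no serious obstacle here; the only point requiring care is the invariance of the positive orthant, which is needed so that the sign of $\theta_i^{(\varepsilon)}$ in the Lyapunov computation is controlled.
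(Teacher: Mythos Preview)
Your proof is correct and follows essentially the same approach as the paper: use $f$ as a Lyapunov function along the flow, bound $f(\btheta^{(\varepsilon)}(0))$ uniformly in $\varepsilon$, and conclude via coercivity of $f$ from the positive definiteness of $\bM$. You are in fact slightly more careful than the paper in explicitly justifying the forward invariance of the positive orthant (needed for the sign in the Lyapunov computation) and in noting that the a priori bound yields global existence.
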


\begin{proof}
	As Equation \eqref{eq:GF-reparametrized} is a (reparametrized) gradient flow of $f$, $f$ is a Lyapunov function, i.e.,
	\begin{equation*}
		\frac{\diff }{\diff t} f(\btheta) = \sum_{i=1}^{d} \frac{\diff f}{\diff \theta_i} \frac{\diff \theta_i}{\diff t} \underset{\eqref{eq:GF-reparametrized}}{=} - \sum_{i=1}^{d}\theta_i \left(\frac{\diff f}{\diff \theta_i}\right)^2 \leq 0 \, .
	\end{equation*}
	Thus for all $\varepsilon \in (0,1]$, for all $t \geq 0$, 
	\begin{equation}
		\label{eq:aux-2}
		f(\btheta^{(\varepsilon)}(t)) \leq f(\btheta^{(\varepsilon)}(0)) \leq \sup_{\varepsilon \in (0,1]} f(\btheta^{(\varepsilon)}(0)) \, .
	\end{equation}
	This supremum is finite as $f$ is continuous and $\btheta^{(\varepsilon)}(0)$ is uniformly bounded for $\varepsilon \in (0,1]$. Further, as $\bM$ is positive definite (Proposition \ref{prop:positive-definite}), 
	\begin{equation*}
		f(\btheta) = \frac{1}{2} \Vert \by \Vert^2 - \langle \br, \btheta \rangle + \frac{1}{2} \langle \btheta,  \bM \btheta \rangle \to \infty \qquad \text{as }\Vert \btheta \Vert \to \infty \, .
	\end{equation*}
	Thus the uniform bound \eqref{eq:aux-2} implies a uniform bound on $\Vert \btheta^{(\varepsilon)}(t) \Vert$.
\end{proof}

The central idea of the proof of Theorem \ref{thm:main} is to keep track of the size of the coordinates of $\btheta^{(\varepsilon)}(t)$, in order to be able to determine which coordinates of $\btheta^{(\varepsilon)}(t)$ are activated depending on time~$t$. More precisely, define 
\begin{equation}
	\label{eq:def-w}
	w_i^{(\varepsilon)} = \frac{\log \theta_i^{(\varepsilon)}}{\log \varepsilon} \, .
\end{equation}
Equivalently, this gives $\theta_i^{(\varepsilon)} = \varepsilon^{w_i^{(\varepsilon)}}$. This logarithmic transformation of the coordinates is particularly convenient because its time derivative is affine in $\btheta^{(\varepsilon)}$:
\begin{align*}
	\frac{\diff w_i^{(\varepsilon)}}{\diff s} = \frac{\diff t}{\diff s} \frac{\diff w_i^{(\varepsilon)}}{\diff t} = \left(\log \frac{1}{\varepsilon} \right) \frac{1}{\theta_i^{(\varepsilon)} \log \varepsilon} \frac{\diff \theta_i^{(\varepsilon)}}{\diff t} \underset{\eqref{eq:GF-reparametrized}}{=} \sum_{j=1}^d M_{ij} \theta_j^{(\varepsilon)} - r_i \, ,
\end{align*}
or, using the vector notation $\bw^{(\varepsilon)} = (w_1^{(\varepsilon)}, \dots, w_d^{(\varepsilon)})$,
\begin{align*}
	\frac{\diff \bw^{(\varepsilon)}}{\diff s}  = \bM \btheta^{(\varepsilon)} - \br \, .
\end{align*}
Our proof technique determines the limit of $\bw^{(\varepsilon)}$ as $\varepsilon \to 0$. The limit is described as the Lagrange multiplier of an optimization problem closely related to \eqref{eq:main-optim-pb}. We start with a brief reminder on duality in optimization. 

\begin{proposition}
	\label{prop:optim-LCP}
	Let $\bq, \bz \in \R^d$. The two following statements are equivalent:
	\begin{enumerate}
		\item $\bz$ is the unique minimizer of the constrained optimization problem 
		\begin{equation}
			\label{eq:optim}
			\underset{\btheta \in \R^d, \, \btheta \geq \bzero}{\rm{min}.} \left\{ \langle \bq, \btheta \rangle + \frac{1}{2} \langle \btheta, \bM \btheta \rangle \right\} \, .
		\end{equation}
		\item There exists $\bw \in \R^d$ such that $(\bw,\bz)$ is the unique solution of 
		\begin{align*}
			&\bw = \bq + \bM \bz \, , \\
			&\bw \geq \bzero \, , \qquad \bz \geq \bzero \, , \qquad \bw^\top \bz = 0 \, .
		\end{align*}
		The four conditions above form a so-called \emph{linear complementarity problem (LCP)}, where $\bq$ and $\bM$ are the parameters and $\bw$ and $\bz$ are the variables. 
	\end{enumerate}
\end{proposition}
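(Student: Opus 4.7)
The plan is to recognize this as the standard KKT characterization of the unique minimizer of a strictly convex quadratic program over the non-negative orthant. First I would note that by Proposition~\ref{prop:positive-definite} the matrix $\bM$ is positive definite, so the objective $g(\btheta) = \langle \bq, \btheta \rangle + \tfrac{1}{2}\langle \btheta, \bM\btheta\rangle$ is strictly convex and coercive; combined with the closedness of $\{\btheta \geq \bzero\}$, this guarantees existence and uniqueness of the minimizer in \eqref{eq:optim}.

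Next I would establish the equivalence (1) $\Leftrightarrow$ (2) via the KKT conditions. For (1) $\Rightarrow$ (2), I would form the Lagrangian $L(\btheta,\bw) = g(\btheta) - \langle \bw, \btheta\rangle$ with $\bw \geq \bzero$ the multipliers for the constraints $\btheta \geq \bzero$. Slater's condition holds trivially (e.g. $\btheta = \mathbf{1} > \bzero$ is strictly feasible), so KKT yields the existence of $\bw \geq \bzero$ with stationarity $\nabla_\btheta L = \bq + \bM \bz - \bw = \bzero$ and componentwise complementary slackness $w_i z_i = 0$. Stationarity gives $\bw = \bq + \bM\bz$, and since $w_i, z_i \geq 0$ the complementary slackness conditions are equivalent to the single scalar equation $\bw^\top \bz = 0$. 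For the converse (2) $\Rightarrow$ (1), given any $(\bw,\bz)$ satisfying the listed relations, the same KKT identities hold, and by convexity of $g$ they are sufficient for $\bz$ to minimize \eqref{eq:optim}.

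Finally I would address uniqueness in the LCP formulation: strict convexity of $g$ forces $\bz$ to be unique, after which $\bw$ is determined uniquely by the linear relation $\bw = \bq + \bM \bz$. I do not anticipate any real obstacle here, as the argument is entirely textbook KKT/LCP material once positive definiteness of $\bM$ is invoked; the only mild care is in rewriting the componentwise slackness $w_i z_i = 0$ as the global identity $\bw^\top \bz = 0$, which is legitimate precisely because $\bw, \bz \geq \bzero$.
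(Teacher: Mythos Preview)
Your proposal is correct and follows essentially the same KKT-based approach as the paper: form the Lagrangian, read off stationarity, feasibility, and complementary slackness, and invoke positive definiteness of $\bM$ (via Proposition~\ref{prop:positive-definite}) for uniqueness. The only cosmetic difference is that the paper deduces uniqueness of the LCP solution first (citing Proposition~\ref{prop:unicity-sol-lcp}) and then transfers it to the optimization problem, whereas you argue in the opposite direction from strict convexity; both are equally valid.
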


The linear complementarity problem is widely studied; see for instance the monograph of~\citet{cottle2009linear} or Appendix \ref{sec:properties}. In this connection with the quadratic programming problem \eqref{eq:optim}, the variable~$\bw$ should be seen as the Lagrange multiplier associated to the constraint $\btheta \geq \bzero$. The LCP expresses the Karush–Kuhn–Tucker (KKT) conditions for optimality to hold. More precisely, $\bw = \bq + \bM \bz$ is a condition of stationarity of the Lagrangian; $\bw \geq \bzero$ and $\bz \geq \bzero$ are respectively dual and primal feasibility conditions; and $\bw^\top \bz = 0$ is a complementarity slackness condition. Put together, the conditions $\bw \geq \bzero$, $\bz \geq \bzero$ and $\bw^\top \bz = 0$ impose that for all $i \in \{1,\dots,d\}$, either $w_i =0$ or $z_i = 0$. 

Proposition \ref{prop:optim-LCP} is classical; nevertheless we detail the appropriate references in Appendix~\ref{sec:proof-prop-optim-LCP}. 

We are now in position to describe the asymptotic behavior of $w_i^{(\varepsilon)} = \frac{\log \theta_i^{(\varepsilon)}}{\log \varepsilon}$. Define 
\begin{equation*}
	\bz^{(\varepsilon)}(s) = \int_{0}^{s} \diff u \,  \btheta^{(\varepsilon)}(u) \, .
\end{equation*}

\begin{proposition}
	\label{prop:uniform-convergences}
	Let $(\bw(s),\bz(s))$ be the unique solution of the linear complementarity problem 
	\begin{align}
		\label{eq:LCP-uniform}
		\begin{split}
			&\bw = \bk-s\br + \bM \bz \, , \\ 
			&\bw \geq \bzero \, , \qquad \bz \geq \bzero \, , \qquad \bw^\top \bz = 0 \, .
		\end{split}
	\end{align}
	Then $\bw^{(\varepsilon)}(s)$ and $\bz^{(\varepsilon)}(s)$ converge respectively to $\bw(s)$ and $\bz(s)$ as $\varepsilon \to 0$, uniformly on compact subsets of $\R_{\geq 0}$.
\end{proposition}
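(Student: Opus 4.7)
My plan is to integrate the ODE for $\bw^{(\varepsilon)}$, extract a subsequential limit via Arzelà--Ascoli, and identify the limit as the unique solution of the LCP \eqref{eq:LCP-uniform}. The starting point is the exact identity
\begin{equation*}
	\bw^{(\varepsilon)}(s) = \bw^{(\varepsilon)}(0) + \bM \bz^{(\varepsilon)}(s) - s \br \, ,
\end{equation*}
obtained by integrating the ODE $\diff \bw^{(\varepsilon)}/\diff s = \bM \btheta^{(\varepsilon)} - \br$ already derived above; note that $w_i^{(\varepsilon)}(0) = k_i + \log C_i / \log \varepsilon \to k_i$. Because Lemma \ref{lem:bounded} provides a uniform bound $\Vert \btheta^{(\varepsilon)}(t) \Vert \leq B$, the family $(\bz^{(\varepsilon)})$ is $B$-Lipschitz in $s$ and uniformly bounded on compacts, so Arzelà--Ascoli lets me extract a subsequence $\bz^{(\varepsilon_n)} \to \bz$ uniformly on compacts of $\R_{\geq 0}$. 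Setting $\bw(s) := \bk + \bM \bz(s) - s \br$, the affine relation of \eqref{eq:LCP-uniform} holds by construction and $\bw^{(\varepsilon_n)} \to \bw$ uniformly on compacts as well.

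Next I would verify the inequalities and the complementarity condition. Non-negativity $\bz \geq \bzero$ is immediate from $\btheta^{(\varepsilon)} \geq \bzero$. For $\bw \geq \bzero$, the bound $\theta_i^{(\varepsilon)} \leq B$ gives $w_i^{(\varepsilon)} \geq \log B / \log \varepsilon \to 0$ as $\varepsilon \to 0$, so the limit satisfies $w_i(s) \geq 0$. The delicate step, which I expect to be the main obstacle, is the complementarity $\bw^\top \bz = 0$. If $z_i(s) > 0$ at some $s>0$, then by continuity of $\bz$ (as the integral of a bounded function) there exists $s_0 \in (0,s)$ with $z_i(s_0) > 0$. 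Invoking Lemma \ref{lem:nondecreasing} to say that $\theta_i^{(\varepsilon_n)}$ is nondecreasing in $s$ for small enough $\varepsilon_n$, combined with uniform convergence $z_i^{(\varepsilon_n)}(s_0) \to z_i(s_0)$, I get
\begin{equation*}
	s_0 \, \theta_i^{(\varepsilon_n)}(s_0) \geq z_i^{(\varepsilon_n)}(s_0) \geq \tfrac{1}{2} z_i(s_0) > 0
\end{equation*}
for $n$ large, and monotonicity extends this to $\theta_i^{(\varepsilon_n)}(s) \geq z_i(s_0)/(2 s_0)$. Taking logarithms and dividing by $\log \varepsilon_n \to -\infty$ forces $w_i^{(\varepsilon_n)}(s) \to 0$, so $w_i(s) = 0$. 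This yields $w_i(s) z_i(s) = 0$ for every $i$ and $s$.

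Finally, since $\bM$ is positive definite by Proposition \ref{prop:positive-definite}, the LCP \eqref{eq:LCP-uniform} admits a unique solution at each $s$ (a classical fact referenced in Proposition \ref{prop:optim-LCP} and Appendix \ref{sec:properties}). Every subsequential limit therefore coincides with the unique $(\bw(s), \bz(s))$, and the standard compactness argument promotes this to uniform convergence on compacts of the full families $\bz^{(\varepsilon)}$ and $\bw^{(\varepsilon)}$. The crux of the whole argument is the monotonicity-based complementarity step: Lemma \ref{lem:nondecreasing} is precisely what prevents an active coordinate from drifting back down to exponentially small scales and destabilizing the logarithmic rescaling $w_i^{(\varepsilon)} = \log \theta_i^{(\varepsilon)} / \log \varepsilon$.
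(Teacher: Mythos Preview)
Your proof is correct and follows essentially the same route as the paper: apply Arzel\`a--Ascoli (you do it on $\bz^{(\varepsilon)}$ alone and recover $\bw^{(\varepsilon)}$ from the affine identity, while the paper treats the pair jointly), verify that any subsequential limit solves the LCP \eqref{eq:LCP-uniform}, and conclude by uniqueness. The only substantive variation is in the complementarity step: the paper bounds $\left\vert w_i^{(\varepsilon)} z_i^{(\varepsilon)} \right\vert \leq \frac{s}{|\log \varepsilon|}\, \theta_i^{(\varepsilon)}(s)\,|\log \theta_i^{(\varepsilon)}(s)|$ directly (using monotonicity and that $x|\log x|$ is bounded on $[0,B]$), whereas you argue the logically equivalent implication $z_i(s)>0 \Rightarrow \theta_i^{(\varepsilon_n)}(s)\geq c>0 \Rightarrow w_i^{(\varepsilon_n)}(s)\to 0$; both hinge on Lemma~\ref{lem:nondecreasing} in exactly the same way, and your version is arguably a bit cleaner since it avoids the auxiliary $x|\log x|$ bound.
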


\begin{proof}
	In this proof, we define $\varepsilon_0 > 0$ as in Lemma \ref{lem:nondecreasing} and $B > 0$ as in Lemma \ref{lem:bounded}. Further we take $\varepsilon_1 = \min\left(\varepsilon_0, \frac{1}{2}\right)$ and assume that $\varepsilon \leq \varepsilon_1$.
	
	Fix $S > 0$ and define the continuous functions 
	\begin{align*}
		&\bvarphi^{(\varepsilon)} : s \in [0,S] \mapsto \left(\bw^{(\varepsilon)}(s), \bz^{(\varepsilon)}(s)\right) \in (\R^d)^2 \, , \\
		&\bvarphi : s \in [0,S] \mapsto \left(\bw(s), \bz(s)\right) \in (\R^d)^2 \, .
	\end{align*}
	We want to show that $\bvarphi^{(\varepsilon)} \to \bvarphi$ uniformly as $\varepsilon \to 0$. First, we use the Arzel\`a--Ascoli theorem to check that the set $\{\bvarphi^{(\varepsilon)}, \varepsilon \in (0, \varepsilon_1)\}$ is relatively compact in the space of continuous functions from $[0,S]$ to $(\R^d)^2$. The reader can consult the monograph of \citet[Section IV.6]{dunford1988linear} for a reference on the Arzel\`a--Ascoli theorem for real-valued functions; the multidimensional extension is straightforward. 
	\begin{itemize}
		\item For $\varepsilon \in (0, \varepsilon_1)$, $s \in [0,S]$, we bound $\Vert \bvarphi^{(\varepsilon)}(s) \Vert^2 = \Vert \bw^{(\varepsilon)}(s) \Vert^2 + \Vert \bz^{(\varepsilon)}(s) \Vert^2$. We bound the two terms separately.
		\begin{itemize}
			\item From Lemmas \ref{lem:nondecreasing} and \ref{lem:bounded}, 
			\begin{align*}
				C_i \varepsilon^{k_i} = \theta_i^{(\varepsilon)}(0) \leq \theta_i^{(\varepsilon)}(s) \leq B \, ,
			\end{align*}
			thus ($\log \varepsilon < 0$),
			\begin{align*}
				\frac{\log B}{\log \varepsilon} \leq w_i^{(\varepsilon)} = \frac{\log \theta_i^{(\varepsilon)}}{\log \varepsilon} \leq \frac{\log C_i}{\log \varepsilon} + k_i \, .
			\end{align*}
			As $\varepsilon\leq 1/2$, $\log \varepsilon$ is bounded away from $0$. This shows that $w_i^{(\varepsilon)}$ is bounded uniformly for $\varepsilon \in (0,\varepsilon_1]$ and $s \in [0,S]$. 
			\item From Lemma \ref{lem:bounded},
			\begin{align*}
				\Vert \bz^{(\varepsilon)}(s) \Vert \leq \int_{0}^{s} \diff u \, \Vert \btheta^{(\varepsilon)}(u) \Vert \leq sB \leq SB \, .
			\end{align*}
		\end{itemize}
		The two points above show that $\Vert \bvarphi^{(\varepsilon)}(s) \Vert^2$ is bounded uniformly for $\varepsilon \in (0,\varepsilon_1]$ and $s \in [0,S]$.
		\item The square norm of the derivative
		\begin{align*}
			\left\Vert  \frac{\diff \bvarphi^{(\varepsilon)}}{\diff s}(s)\right\Vert^2 &= \left\Vert  \frac{\diff \bw^{(\varepsilon)}}{\diff s}(s)\right\Vert^2 + \left\Vert  \frac{\diff \bz^{(\varepsilon)}}{\diff s}(s)\right\Vert^2 \\
			&= \left\Vert \bM \btheta^{(\varepsilon)}(s)-\br \right\Vert^2 + \left\Vert \btheta^{(\varepsilon)}(s)\right\Vert^2
		\end{align*} 
		is bounded uniformly for $\varepsilon \in (0,\varepsilon_1]$ and $s \in [0,S]$ by Lemma \ref{lem:bounded}. Thus the set $\{\bvarphi^{(\varepsilon)}, \varepsilon \in (0, \varepsilon_1)\}$ is equicontinuous. 
	\end{itemize}
	The two points above show that we can apply the Arzel\`a--Ascoli theorem: $\{\bvarphi^{(\varepsilon)}, \varepsilon \in (0, \varepsilon_1)\}$ is relatively compact in the space of continuous functions from $[0,S]$ to $(\R^d)^2$. To conclude on Proposition \ref{prop:uniform-convergences}, it is then sufficient to show that the only subsequential uniform limit of $\bvarphi^{(\varepsilon)}$ as $\varepsilon\to 0$ is $\bvarphi$. 
	
	Let $\bvarphi' = (\bw',\bz')$ be a subsequential uniform limit of $\bvarphi^{(\varepsilon)} = (\bw^{(\varepsilon)}, \bz^{(\varepsilon)})$ as $\varepsilon\to 0$. There exists $\varepsilon(n) \in (0, \varepsilon_1]$ such that $\varepsilon(n) \to 0$ and $\bvarphi^{(\varepsilon(n))} \to \bvarphi'$ uniformly as $n \to \infty$. Then $\bw^{(\varepsilon(n))} \to \bw'$ and $\bz^{(\varepsilon(n))} \to \bz'$ uniformly as $n \to \infty$. We check that $(\bw',\bz')$ is a solution of the LCP \eqref{eq:LCP-uniform}.
	\begin{itemize}
		\item First,
		\begin{align*}
			\frac{\diff}{\diff s} \left(\bw^{(\varepsilon(n))}(s) - \bk + s\br - \bM \bz^{(\varepsilon(n))}(s)\right) &= \frac{\diff \bw^{(\varepsilon(n))}}{\diff s}(s) + \br    - \bM \frac{\diff \bz^{(\varepsilon(n))}}{\diff s}(s) \\
			&= \bM \btheta^{(\varepsilon(n))}(s) - \br + \br -\bM \btheta^{(\varepsilon(n))}(s) \\
			&= \bzero \, .
		\end{align*}
		Thus $\bw^{(\varepsilon(n))}(s) - \bk + s\br - \bM \bz^{(\varepsilon(n))}(s)$ is constant in $s$, equal to its initial value $\bw^{(\varepsilon(n))}(0) -\bk$. Moreover, for all $i$, 
		\begin{align*}
			w_i^{(\varepsilon(n))}(0) -k_i = \frac{\log C_i\varepsilon(n)^{k_i}}{\log \varepsilon(n)} - k_i = \frac{\log C_i}{\log \varepsilon(n)} \xrightarrow[n \to \infty]{} 0 \, .
		\end{align*}
		Thus $\bw^{(\varepsilon(n))}(0) -\bk = \bw^{(\varepsilon(n))}(s) - \bk + s\br - \bM \bz^{(\varepsilon(n))}(s) \to \bzero$ as $n \to \infty$. By identification of the limit, we have $\bw'(s) - \bk + s\br - \bM \bz'(s) = \bzero$.
		\item Second, using Lemma \ref{lem:bounded} and that $\log \varepsilon(n) < 0$,
		\begin{align*}
			w_i^{(\varepsilon(n))}(s) = \frac{\log \theta_i^{(\varepsilon(n))}(s)}{\log \varepsilon(n)} \geq \frac{\log B}{\log \varepsilon(n)}  \, ,
		\end{align*}
		thus taking $n \to \infty$, we obtain $\bw' \geq \bzero$.
		\item Third, we have $\bz^{(\varepsilon(n))}(s) \geq \bzero$ trivially from the definition, and thus taking $n \to \infty$, we obtain $\bz' \geq \bzero$.
		\item Finally,
		\begin{align*}
			\left\vert \bw^{(\varepsilon(n))}(s)^\top \bz^{(\varepsilon(n))}(s)\right\vert &\leq \sum_{i} \left\vert w^{(\varepsilon(n))}_i(s) z_i^{(\varepsilon(n))}(s) \right\vert  
			\\
			&= \sum_{i} \frac{|\log \theta_i^{(\varepsilon(n))}(s)|}{|\log \varepsilon(n)|} \int_{0}^{s} \diff u \, \theta_i^{(\varepsilon(n))}(u) \\
			&\leq \frac{s}{|\log \varepsilon(n)|} \sum_{i} |\log \theta_i^{(\varepsilon(n))}(s)| \theta_i^{(\varepsilon(n))}(s) \, ,
		\end{align*}
		where in this last inequality we use Lemma \ref{lem:nondecreasing}. The function $x \mapsto |\log x| x$ can be continuously extended in $0$; it is thus bounded on $[0,B]$. Thus
		\begin{align*}
			\left\vert \bw^{(\varepsilon(n))}(s)^\top \bz^{(\varepsilon(n))}(s)\right\vert \leq \frac{sd}{|\log \varepsilon(n)|} \max_{x \in [0,B]} |\log x| x \, .
		\end{align*}
		Taking $n \to \infty$, we obtain $\bw'(s)^\top \bz'(s) = 0$. 
	\end{itemize}
	The four points above show that for all $s \in [0,S]$, $(\bw'(s),\bz'(s))$ is a solution of the LCP~\eqref{eq:LCP-uniform}. As the solution is unique (Proposition \ref{prop:unicity-sol-lcp}), $\bvarphi' = (\bw',\bz') = (\bw,\bz) = \bvarphi$. Thus $\bvarphi$ is the unique subsequent limit of $\bvarphi^{(\varepsilon)}$ as $\varepsilon \to 0$. Thus $\bvarphi^{(\varepsilon)} \xrightarrow[\varepsilon \to 0]{} \bvarphi$ uniformly on $[0,S]$. 
\end{proof}

We now show how the proof of Theorem \ref{thm:main} essentially follows from Proposition \ref{prop:uniform-convergences}.

\begin{proof}[Proof of Theorem \ref{thm:main}]
	First, note that it is shown in Proposition \ref{prop:optim-LCP} that there is a unique minimizer to \eqref{eq:main-optim-pb}. We continue by proving successively the three points of the theorem.
	\begin{enumerate}
		\item[\eqref{it:main-1}] The fact that $\bmu(s)$ is nondecreasing follows from the connection with the LCP (Proposition \ref{prop:optim-LCP}) and the antitonicity property of the solution of the LCP (Proposition~\ref{prop:antitonicity}). We detail this argument.
		
		Recall that $\bmu(s)$ is the unique minimizer of 
		\begin{equation*}
			\underset{\btheta \in \R^d, \, \btheta \geq \bzero}{\rm{min}.} \left\{ f(\btheta) + \frac{1}{s} \langle \bk, \btheta \rangle = \frac{1}{2} \Vert \by \Vert^2 + \left\langle \frac{1}{s}\bk-\br, \btheta \right\rangle + \frac{1}{2} \langle \btheta,  \bM \btheta \rangle \right\} 
		\end{equation*}
		Thus by Proposition \ref{prop:optim-LCP}, there exists $\bv(s) \in \R^d$ such that $(\bv(s),\bmu(s))$ is the unique solution of the LCP 
		\begin{align*}
			\begin{split}
				&\bv = \frac{1}{s}\bk-\br + \bM \bmu \, , \\
				&\bv \geq \bzero \, , \qquad \bmu \geq \bzero \, , \qquad \bv^\top \bmu = 0 \, .
			\end{split}
		\end{align*}
		Let us make a side remark for later purposes. We rewrite the LCP above as 
		\begin{align*}
			\begin{split}
				&s\bv = \bk-s\br + \bM (s\bmu) \, , \\
				&s\bv \geq \bzero \, , \qquad s\bmu \geq \bzero \, , \qquad (s\bv)^\top (s\bmu) = 0 \, .
			\end{split}
		\end{align*}
		Thus $(s\bv(s), s\bmu(s))$ is a solution of the LCP \eqref{eq:LCP-uniform}, of which $(\bw(s),\bz(s))$ is also the solution. By unicity of the solution of the LCP (Proposition \ref{prop:unicity-sol-lcp}),
		\begin{equation}
			\label{eq:identification-LCPs}
			\bz(s) = s\bmu(s) \, .
		\end{equation}
		
		We now return to the proof of point \eqref{it:main-1} of the theorem. Consider $s_1 \leq s_2$. Then $(\bv(s_1), \bmu(s_1))$ (resp.~$(\bv(s_2), \bmu(s_2))$) is the unique solution of the LCP with parameters $\bq^{(1)} = \frac{1}{s_1}\bk-\br$ (resp.~$\bq^{(2)} = \frac{1}{s_2}\bk-\br$) and~$\bM$. Note that as $\bk \geq \bzero$, $\bq^{(1)} \geq \bq^{(2)}$. As $\bM$ is symmetric positive definite (Proposition~\ref{prop:positive-definite}) with non-positive off-diagonal entries (Assumption \ref{it:ass-M}), we apply the antitonicity property (Proposition~\ref{prop:antitonicity}) and obtain that $\bmu(s_1) \leq \bmu(s_2)$. 
		
		\medskip
		\item[\eqref{it:main-3}] Abusing again on the notation of the time indexations, we have
		\begin{align*}
			\frac{1}{s \log \frac{1}{\varepsilon}} \int_{0}^{s \log \frac{1}{\varepsilon}} \diff t \,  \btheta^{(\varepsilon)}(t) = \frac{1}{s } \int_{0}^{s } \diff u \,  \btheta^{(\varepsilon)}(u) = \frac{1}{s} \bz^{(\varepsilon)}(s) \, .
		\end{align*}
		By Proposition \ref{prop:uniform-convergences}, $\bz^{(\varepsilon)}(s)$ converges to $\bz(s)$ uniformly on compact subsets of $\R_{\geq 0}$. Thus $\frac{1}{s}\bz^{(\varepsilon)}(s)$ converges uniformly to $\frac{1}{s} \bz(s)$ on compact subsets of $\R_{> 0}$. (Note that as the factor $1/s$ diverges at $0$, we need to consider compact subsets bounded away from $0$.) As $\frac{1}{s} \bz(s) = \bmu(s)$ (Equation \eqref{eq:identification-LCPs}), this concludes point~\eqref{it:main-3} of the theorem.
		
		\medskip
		
		\item[\eqref{it:main-2}] Fix $p \in \{1, \dots, q, q+1\}$ and $u,v \in \R_{>0}$ such that $s_{p-1} < u < v < s_p$ (with the conventions $s_0 = 0$, $s_{q+1} = \infty$). Let $I$ be the constant value of $I(s)$ for $s \in (s_{p-1}, s_p)$. To prove point~\eqref{it:main-2} of the theorem, it is sufficient to show $\btheta^{(\varepsilon)}\left(s \log \frac{1}{\varepsilon}\right) \xrightarrow[\varepsilon \to 0]{} \btheta_*^{(I)} $ uniformly for $s \in [u,v]$.
		
		For $\btheta \in \R^d$, define 
		\begin{equation}
			\label{eq:lyapunov-LV}
			V(\btheta) = \sum_{i\in I} \left(\theta_i - \theta_{*,i}^{(I)} \log \theta_i\right) \, .
		\end{equation}
		The function $V$ is inspired from a Lyapunov function used in the study of Lotka--Volterra equations \citep{goh1979stability}. Here, we show that $V$ is ``almost'' decreasing on the interval $(s_{p-1}, s_p)$. We first compute
		\begin{align}
			\label{eq:aux-3}
			\frac{\diff}{\diff t} V(\btheta^{(\varepsilon)}) &\underset{\eqref{eq:GF-reparametrized}}{=} \sum_{i \in I} \left(\theta_i^{(\varepsilon)} - \theta_{*,i}^{(I)}\right) \left(r_i - \left(\bM\btheta^{(\varepsilon)}\right)_i\right) \, .
		\end{align}
		Using the definition of $\btheta_*^{(I)}$ in Proposition \ref{prop:fixed-points}, we have 
		\begin{equation*}
			(\bM \btheta_*^{(I)})_I = \bM_{II}\btheta_{*,I}^{(I)} + \bM_{II^c}\btheta_{*,I^c}^{(I)} = \bM_{II}\bM_{II}^{-1} \br_I + \bM_{II^c}\bzero = \br_I \, ,
		\end{equation*}
		thus we can rewrite \eqref{eq:aux-3} as 
		\begin{align*}
			\frac{\diff}{\diff t} V(\btheta^{(\varepsilon)}) &= \sum_{i \in I} \left(\theta_i^{(\varepsilon)} - \theta_{*,i}^{(I)}\right) \left(\left(\bM\btheta_*^{(I)}\right)_i - \left(\bM\btheta^{(\varepsilon)}\right)_i\right) \\
			&= - \left\langle \btheta^{(\varepsilon)} - \btheta_*^{(I)} , \bM (\btheta^{(\varepsilon)} - \btheta_*^{(I)} ) \right\rangle - \sum_{i \notin I} \theta_i^{(\varepsilon)}  \left(\left(\bM\btheta_*^{(I)}\right)_i - \left(\bM\btheta^{(\varepsilon)}\right)_i\right) \, .
		\end{align*}
		Fix $u',v'$ such that $s_{p-1} < u' < u < v < v' < s_p$. We integrate the equality above for $s \in [u',v']$:
		\begin{align*}
			V(\btheta^{(\varepsilon)}(v')) - V(\btheta^{(\varepsilon)}(u')) &= \int_{u'}^{v'} \diff s \frac{\diff}{\diff s} V(\btheta^{(\varepsilon)}) \\
			&= \left(\log \frac{1}{\varepsilon}\right) \int_{u'}^{v'} \diff s \frac{\diff}{\diff t} V(\btheta^{(\varepsilon)}) \\ 
			&= \left(\log \frac{1}{\varepsilon}\right) \Bigr[-\int_{u'}^{v'} \diff s\left\langle \btheta^{(\varepsilon)} - \btheta_*^{(I)} , \bM (\btheta^{(\varepsilon)} - \btheta_*^{(I)} ) \right\rangle \\
			&\qquad\qquad\qquad- \sum_{i \notin I} \int_{u'}^{v'} \diff s \,  \theta_i^{(\varepsilon)}  \left(\left(\bM\btheta_*^{(I)}\right)_i - \left(\bM\btheta^{(\varepsilon)}\right)_i\right)\Bigr] \, .
		\end{align*}
		Denote $\lambda_{\min}(\bM)$ the minimal eigenvalue of $\bM$. By Proposition \ref{prop:positive-definite}, $\lambda_{\min}(\bM) > 0$.
		\begin{align}
			\label{eq:aux-5}
			\begin{split}
				\int_{u'}^{v'} \diff s \left\Vert \btheta^{(\varepsilon)} - \btheta_*^{(I)} \right\Vert^2 &\leq \frac{1}{\lambda_{\min}(\bM)} \int_{u'}^{v'} \diff s\left\langle \btheta^{(\varepsilon)} - \btheta_*^{(I)} , \bM (\btheta^{(\varepsilon)} - \btheta_*^{(I)} ) \right\rangle \\
				&= \frac{1}{\lambda_{\min}(\bM)} \Bigr[ \frac{1}{\log \frac{1}{\varepsilon}} \left(V\left(\btheta^{(\varepsilon)}(u')\right) - V\left(\btheta^{(\varepsilon)}(v')\right)\right) \\
				&\qquad\qquad\qquad- \sum_{i \notin I} \int_{u'}^{v'} \diff s \, \theta_i^{(\varepsilon)}  \left(\left(\bM\btheta_*^{(I)}\right)_i - \left(\bM\btheta^{(\varepsilon)}\right)_i\right)\Bigr] \, .
			\end{split}
		\end{align}
		Take $\varepsilon_0 > 0$ as defined in Lemma \ref{lem:nondecreasing} and now assume $\varepsilon \leq \min(1,\varepsilon_0)$ so that both Lemmas \ref{lem:nondecreasing} and \ref{lem:bounded} apply. Then we have the following estimates:
		\begin{itemize}
			\item Fix $i \notin I$. From Lemma \ref{lem:bounded}, there exists a constant $C$ independent of $\varepsilon$ such that 
			\begin{align*}
				\left\vert \int_{u'}^{v'} \diff s \,  \theta_i^{(\varepsilon)}  \left(\left(\bM\btheta_*^{(I)}\right)_i - \left(\bM\btheta^{(\varepsilon)}\right)_i\right) \right\vert &\leq C \int_{u'}^{v'} \diff s \,  \theta_i^{(\varepsilon)} \\
				&= C \left(v' \frac{1}{v'} \int_{0}^{v'} \diff s \,  \theta_i^{(\varepsilon)} - u' \frac{1}{u'} \int_{0}^{u'} \diff s \,  \theta_i^{(\varepsilon)} \right)
			\end{align*}
			Using Theorem \ref{thm:main}.\eqref{it:main-3}, this last quantity converges as $\varepsilon \to 0$ to $v' \mu_i(v') - u' \mu_i(u')$, which is equal to $0$ as $i \notin I$. We thus have 
			\begin{equation}
				\label{eq:aux-6}
				\int_{u'}^{v'} \diff s \,  \theta_i^{(\varepsilon)}  \left(\left(\bM\btheta_*^{(I)}\right)_i - \left(\bM\btheta^{(\varepsilon)}\right)_i\right) \xrightarrow[\varepsilon\to 0]{} 0 \, .
			\end{equation}
			\item Further, if $s = u'$ or $s = v'$, 
			\begin{align}
				\label{eq:aux-4}
				\begin{split}
					V(\btheta^{(\varepsilon)}(s)) &= \sum_{i\in I} \left(\theta_i^{(\varepsilon)}(s) - \theta_{*,i}^{(I)} \log \theta_i^{(\varepsilon)}(s)\right) \\
					&= \sum_{i\in I} \theta_i^{(\varepsilon)}(s) + \left(\log \frac{1}{\varepsilon}\right)\sum_{i\in I} \theta_{*,i}^{(I)} w_i^{(\varepsilon)}(s) 
				\end{split}
			\end{align}
			by the definition of $w_i^{(\varepsilon)}$ in Equation \eqref{eq:def-w}. The first term $\sum_{i\in I} \theta_i^{(\varepsilon)}(s)$ is bounded independently of $\varepsilon$ by Lemma \ref{lem:bounded}. Further, for all $i \in I$, $\mu_i(s) > 0$ and thus by Equation \eqref{eq:identification-LCPs}, $z_i(s) > 0$. By the complementary slackness of $\bz(s)$ and $\bw(s)$, we must have $w_i(s) = 0$. As a consequence, using Proposition \ref{prop:uniform-convergences},
			\begin{align*}
				\sum_{i\in I} \theta_{*,i}^{(I)} w_i^{(\varepsilon)}(s) \xrightarrow[\varepsilon \to 0]{} \sum_{i\in I} \theta_{*,i}^{(I)} w_i(s) = 0 \, .
			\end{align*} 
			Returning to Equation \eqref{eq:aux-4}, we obtain that
			\begin{align}
				\label{eq:aux-7}
				&V(\btheta^{(\varepsilon)}(u')) = o\left(\log \frac{1}{\varepsilon}\right) \, , &&V(\btheta^{(\varepsilon)}(v')) = o\left(\log \frac{1}{\varepsilon}\right) \, .
			\end{align} 
		\end{itemize}
		Putting together Equations \eqref{eq:aux-5}, \eqref{eq:aux-6} and \eqref{eq:aux-7}, we obtain 
		\begin{align}
			\label{eq:aux-9}
			\int_{u'}^{v'} \diff s \Vert \btheta^{(\varepsilon)} - \btheta_*^{(I)} \Vert^2 \xrightarrow[\varepsilon \to 0]{} 0 \, .
		\end{align}
		To conclude on uniform convergence, we use an elementary argument based on monotonicity:
		\begin{align}
			\label{eq:aux-8}
			\max_{s \in [u,v]} \left\vert \theta_i^{(\varepsilon)}(s) -  \theta_{*,i}^{(I)} \right\vert = \max_{s \in [u,v]} \max\left(\theta_i^{(\varepsilon)}(s) -  \theta_{*,i}^{(I)}, \theta_{*,i}^{(I)} - \theta_i^{(\varepsilon)}(s)\right) \, .
		\end{align}
		By Lemma \ref{lem:nondecreasing}, for all $s \in [u,v]$ and $s' \in [v,v']$, $\theta_i^{(\varepsilon)}(s) -  \theta_{*,i}^{(I)} \leq \theta_i^{(\varepsilon)}(s') -  \theta_{*,i}^{(I)}$, thus
		\begin{align*}
			\theta_i^{(\varepsilon)}(s) -  \theta_{*,i}^{(I)} \leq \frac{1}{v'-v} \int_v^{v'} \diff s' \left(\theta_i^{(\varepsilon)}(s') -  \theta_{*,i}^{(I)}\right) \, .
		\end{align*}
		Using H\"older's inequality, we obtain 
		\begin{align*}
			\theta_i^{(\varepsilon)}(s) -  \theta_{*,i}^{(I)} &\leq \frac{1}{\sqrt{v'-v}} \left(\int_v^{v'} \diff s' \left(\theta_i^{(\varepsilon)}(s') -  \theta_{*,i}^{(I)}\right)^2\right)^{1/2} \\
			&\leq \frac{1}{\sqrt{v'-v}} \left(\int_v^{v'} \diff s' \left\Vert\btheta^{(\varepsilon)}(s') -  \btheta_{*}^{(I)}\right\Vert^2\right)^{1/2} \, .
		\end{align*}
		Similarly, 
		\begin{align*}
			\theta_{*,i}^{(I)} - \theta_i^{(\varepsilon)}(s)
			&\leq \frac{1}{\sqrt{u-u'}} \left(\int_{u'}^{u} \diff s' \left\Vert\btheta^{(\varepsilon)}(s') -  \btheta_{*}^{(I)}\right\Vert^2\right)^{1/2} \, .
		\end{align*}
		Finally, plugging these estimates back in Equation \eqref{eq:aux-8} and using Equation \eqref{eq:aux-9}, we obtain
		\begin{align*}
			\max_{s \in [u,v]} \left\vert \theta_i^{(\varepsilon)}(s) -  \theta_{*,i}^{(I)} \right\vert &\leq \max\Biggl(\frac{1}{\sqrt{v'-v}} \left(\int_v^{v'} \diff s' \left\Vert\btheta^{(\varepsilon)}(s') -  \btheta_{*}^{(I)}\right\Vert^2\right)^{1/2}, \\
			&\qquad\qquad\frac{1}{\sqrt{u-u'}} \left(\int_{u'}^{u} \diff s' \left\Vert\btheta^{(\varepsilon)}(s') -  \btheta_{*}^{(I)}\right\Vert^2\right)^{1/2}\Biggl) \\
			&\xrightarrow[\varepsilon\to 0]{} 0 \, .
		\end{align*}
		This being true for all $i \in \{1, \dots, d\}$, we conclude that $\btheta^{(\varepsilon)}$ converges to $\btheta_*^{(I)}$ uniformly on $[u,v]$ and thus point \eqref{it:main-2} of the theorem holds. 
	\end{enumerate}
\end{proof}

\section{Proof of Corollary \ref{coro:main}}
\label{sec:proof-coro-main}

We first study under which condition on $s$ we have $I(s) = \{1,\dots,d\}$. By definition of $I(s)$, this is equivalent to having $\bmu(s) > \bzero$ and thus $I(s) = \{1,\dots,d\}$ if and only if $f(\btheta) + \frac{1}{s}\langle \bk, \btheta \rangle$ is minimized at a positive point. Note that 
\begin{equation*}
	f(\btheta) + \frac{1}{s}\langle \bk, \btheta \rangle  = \frac{1}{2} \Vert \by \Vert^2 - \langle \br - \frac{1}{s} \bk, \btheta \rangle + \frac{1}{2} \langle \btheta,  \bM \btheta \rangle 
\end{equation*}
is minimized at $\bM^{-1}\left(\br - \frac{1}{s} \bk\right)$ thus 
\begin{align}
	\label{eq:I(s)-full}
	\begin{split}
		I(s) = \{1,\dots,d\} \quad &\Leftrightarrow \quad \bM^{-1}\br - \frac{1}{s} \bM^{-1}\bk > \bzero \\ &\Leftrightarrow \quad s > s_* \, , \qquad s_* := \max_i \frac{(\bM^{-1}\bk)_i}{(\bM^{-1}\br)_i} \, .
	\end{split}
\end{align}
Consider $s > s_*$. From Theorem \ref{thm:main}.\eqref{it:main-2},
\begin{equation*}
	\btheta^{(\varepsilon)}\left(s \log \frac{1}{\varepsilon}\right) \xrightarrow[\varepsilon \to 0]{} \btheta_*^{(\{1,\dots,d\})} \, .
\end{equation*}
Take $\eta > 0$. Then there exists $\varepsilon_1 > 0$ such that for all $\varepsilon \leq \varepsilon_1$, 
\begin{equation*}
	\left\Vert \btheta^{(\varepsilon)}\left(s \log \frac{1}{\varepsilon}\right) -  \btheta_*^{(\{1,\dots,d\})}  \right\Vert \leq \eta \, .
\end{equation*}
Thus $\tau_\eta^{(\varepsilon)} \leq s \log \frac{1}{\varepsilon}$. This being true for all $\varepsilon \leq \varepsilon_1$, we have $\limsup_{\varepsilon\to 0} \frac{\tau_{\eta}^{(\varepsilon)}}{\log 1/\varepsilon} \leq s$. This being true for all $s > s_*$, we have $\limsup_{\varepsilon\to 0} \frac{\tau_{\eta}^{(\varepsilon)}}{\log 1/\varepsilon} \leq s_*$.

We are left with showing that $\liminf_{\varepsilon\to 0} \frac{\tau_{\eta}^{(\varepsilon)}}{\log 1/\varepsilon} \geq s_*$. We assume that $\eta < \min_j \theta_{*,j}^{(\{1,\dots,d\})}$, which is possible as from Proposition \ref{prop:fixed-points}, $\btheta_{*}^{(\{1,\dots,d\})} > \bzero$. Consider $s < s_*$. Then from \eqref{eq:I(s)-full}, $I(s) \neq \{1,\dots,d\}$ thus we can consider $i \notin I(s)$. Assume further that $s \neq s_1, \dots, s_q$. Then by Theorem \ref{thm:main}.\eqref{it:main-2}, 
\begin{equation*}
	\theta^{(\varepsilon)}_i\left(s \log \frac{1}{\varepsilon}\right) \xrightarrow[\varepsilon \to 0]{} \theta_{*,i}^{(I(s))} = 0 \, .
\end{equation*}
Denote $\nu = \min_j \theta_{*,j}^{(\{1,\dots,d\})} - \eta > 0$. There exists $\varepsilon_2 > 0$ such that for all $\varepsilon\leq \varepsilon_2$, 
\begin{equation}
	\label{eq:aux-10}
	\theta^{(\varepsilon)}_i\left(s \log \frac{1}{\varepsilon}\right) < \nu \, .
\end{equation}
Define $\varepsilon_0$ as in Lemma \ref{lem:nondecreasing} and assume $\varepsilon \leq \min(\varepsilon_0, \varepsilon_2)$ so that both Lemma \ref{lem:nondecreasing} and Equation~\eqref{eq:aux-10} apply. Then for all $t \leq s \log \frac{1}{\varepsilon}$, 
\begin{align*}
	\theta^{(\varepsilon)}_i(t) \underset{\text{(Lemma \ref{lem:nondecreasing})}}{\leq } \theta^{(\varepsilon)}_i\left(s \log \frac{1}{\varepsilon}\right) \underset{\text{(Equation \eqref{eq:aux-10})}}{< } \nu  \, .
\end{align*}
Thus 
\begin{align*}
	\left\Vert \btheta^{(\varepsilon)}(t) - \btheta_{*}^{(\{1,\dots,d\})}\right\Vert &\geq \left\vert\theta_i^{(\varepsilon)}(t) - \theta_{*,i}^{(\{1,\dots,d\})} \right\vert \geq  \theta_{*,i}^{(\{1,\dots,d\})} - \theta_i^{(\varepsilon)}(t) \\
	&> \theta_{*,i}^{(\{1,\dots,d\})} - \nu \geq \min_j \theta_{*,j}^{(\{1,\dots,d\})} - \nu = \eta \, .
\end{align*}
This being true for all $t \leq s \log \frac{1}{\varepsilon}$, this gives $\tau_{\eta}^{(\varepsilon)} \geq s \log \frac{1}{\varepsilon}$. This being true for all $\varepsilon \leq \min(\varepsilon_0, \varepsilon_2)$, this gives $\liminf_{\varepsilon\to 0} \frac{\tau_{\eta}^{(\varepsilon)}}{\log 1/\varepsilon} \geq s$. This being true for all $s < s_*$, $s \neq s_1, \dots, s_q$, this gives $\liminf_{\varepsilon\to 0} \frac{\tau_{\eta}^{(\varepsilon)}}{\log 1/\varepsilon} \geq s_*$.

We thus conclude that $\lim_{\varepsilon\to 0} \frac{\tau_{\eta}^{(\varepsilon)}}{\log 1/\varepsilon} = s_*$.

\section{Conclusion}

In this paper, we have shown how the implicit regularization of DLNs is generated by incremental learning with successive coordinate activations. We obtain a sharp description of the incremental learning process using an associated regularized optimization problem with decreasing regularization. 

An immediate open question is to obtain a similar description without the anti-correlation assumption \ref{it:ass-M}. This would cover the overparametrized setting. In this case, it should be necessary to parametrize $\theta_i = (u_i^2 - v_i^2)/4$ (as in the article of \citet{vaskevicius2019implicit}, for instance) so that the sign of $\theta_i$ is not constrained. 

Further, we leave open whether our strategy can be adapted to study incremental learning in matrix factorization problems and more general neural networks, as well as the statistical benefits of the induced implicit regularization.

\section*{Acknowledgements}

The author thanks Loucas Pillaud-Vivien for many discussions motivating this project.



\newpage

\appendix
\section{Properties of the Linear Complementarity Problem}
\label{sec:properties}

This section gathers a few properties of the linear complementarity problem (LCP) from the monograph of~\citet{cottle2009linear}. Let $\bq \in \R^d$ and $\bM \in \R^{d \times d}$. We recall that the LCP associated to the parameters $\bq$ and $\bM$ is the problem of finding $(\bw,\bz) \in \left(\R^d\right)^2$ such that 
\begin{align}
	\label{eq:LCP}
	\begin{split}
		&\bw = \bq + \bM \bz \, , \\ 
		&\bw \geq \bzero \, , \qquad \bz \geq \bzero \, , \qquad \bw^\top \bz = 0 \, .
	\end{split}
\end{align}

\begin{proposition}
	\label{prop:unicity-sol-lcp}
	Assume that $\bM$ is symmetric positive definite. Then the LCP \eqref{eq:LCP} has a unique solution. 
\end{proposition}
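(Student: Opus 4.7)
The plan is to handle existence and uniqueness separately, with the direct algebraic uniqueness argument being the main point.

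For uniqueness, I would assume $(\bw_1, \bz_1)$ and $(\bw_2, \bz_2)$ are two solutions of the LCP and compute
\begin{equation*}
    (\bz_1 - \bz_2)^\top \bM (\bz_1 - \bz_2) = (\bz_1 - \bz_2)^\top (\bw_1 - \bw_2) = -\bz_1^\top \bw_2 - \bz_2^\top \bw_1 \leq 0,
\end{equation*}
where I used $\bM(\bz_1 - \bz_2) = \bw_1 - \bw_2$ for the first equality, the complementarity conditions $\bz_1^\top \bw_1 = \bz_2^\top \bw_2 = 0$ for the second, and the sign conditions $\bw_i, \bz_i \geq \bzero$ for the inequality. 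Since $\bM$ is positive definite, this forces $\bz_1 = \bz_2$, and then $\bw_1 = \bq + \bM \bz_1 = \bq + \bM\bz_2 = \bw_2$. This is the key technical step, and it really uses both the complementarity slackness and the SPD hypothesis.

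For existence, I would invoke the connection with the quadratic program stated in Proposition \ref{prop:optim-LCP}. Because $\bM$ is symmetric positive definite, the objective $\btheta \mapsto \langle \bq, \btheta\rangle + \tfrac12 \langle \btheta, \bM \btheta\rangle$ is strictly convex and coercive, so it admits a unique minimizer $\bz$ over the nonempty closed convex set $\{\btheta \in \R^d : \btheta \geq \bzero\}$ (which contains $\bzero$). The implication (i)$\Rightarrow$(ii) of Proposition \ref{prop:optim-LCP} then produces a Lagrange multiplier $\bw \in \R^d$ such that $(\bw, \bz)$ solves the LCP, giving existence.

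The main (and essentially only) obstacle is the uniqueness computation; once one spots the trick of contracting the difference $\bz_1 - \bz_2$ against $\bM$ and using complementary slackness to drop the cross terms, everything follows. Existence is a soft argument that piggybacks on the already-established QP/LCP equivalence.
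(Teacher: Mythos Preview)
Your argument is correct, and it differs from the paper's treatment: the paper gives no proof of its own but simply cites \citet[Theorem 3.1.6]{cottle2009linear}, remarking that the result holds even without the symmetry assumption. Your uniqueness computation via the quadratic form $(\bz_1-\bz_2)^\top \bM(\bz_1-\bz_2)$ is the standard direct trick and makes transparent where positive definiteness and complementary slackness are used; the citation route is shorter and yields the stronger statement for $\bP$-matrices, but yours is self-contained.

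One caution on the existence half: as stated in the paper, Proposition~\ref{prop:optim-LCP} has uniqueness built into both sides, and its proof in Appendix~\ref{sec:proof-prop-optim-LCP} invokes Proposition~\ref{prop:unicity-sol-lcp} to obtain that uniqueness. Citing Proposition~\ref{prop:optim-LCP} wholesale is therefore circular. What you actually need is only the bare KKT implication ``$\bz$ minimizes the QP $\Rightarrow$ some $(\bw,\bz)$ solves the LCP,'' and in Appendix~\ref{sec:proof-prop-optim-LCP} this is established \emph{before} Proposition~\ref{prop:unicity-sol-lcp} is called. Either spell out that KKT step directly, or make explicit that you are using only the non-uniqueness part of the equivalence; with that adjustment there is no logical loop and your proof stands.
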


This result is provided in the monograph of {\citet[Theorem 3.1.6]{cottle2009linear}} (actually without the symmetry requirement). 

\begin{proposition}[antitonicity property]
	\label{prop:antitonicity}
	Assume that $\bM$ is symmetric positive definite, with non-positive off-diagonal entries. Consider $\bq^{(1)} \leq \bq^{(2)}$ and let $(\bw_1,\bz_1)$, $(\bw_2,\bz_2)$ be the unique solutions of~\eqref{eq:LCP} with $\bq = \bq^{(1)}, \bq^{(2)}$ respectively. Then $\bz_1 \geq \bz_2$. 
\end{proposition}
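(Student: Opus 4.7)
The plan is to exploit the variational-inequality formulation of the LCP together with a test-point trick based on componentwise max and min. By Proposition \ref{prop:optim-LCP}, for each $k\in\{1,2\}$ the vector $\bz_k$ is the unique minimizer on $\{\bz\geq\bzero\}$ of the strictly convex quadratic $\frac{1}{2}\langle\bz,\bM\bz\rangle+\langle\bq^{(k)},\bz\rangle$, so its first-order optimality conditions read
\begin{equation*}
	\langle \bM\bz_k + \bq^{(k)},\, \by - \bz_k\rangle \,\geq\, 0 \qquad \text{for every } \by \geq \bzero.
\end{equation*}

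I would first introduce the componentwise positive and negative parts of $\bz_1-\bz_2$, namely $\bu:=(\bz_1-\bz_2)^+$ and $\bv:=(\bz_2-\bz_1)^+$, so that $\bu,\bv\geq\bzero$, $\bz_1-\bz_2=\bu-\bv$, and $\bu$ and $\bv$ have disjoint supports. The target becomes $\bv=\bzero$. Next I would plug in the feasible test points $\by=\max(\bz_1,\bz_2)$ in the optimality condition for $\bz_1$ and $\by=\min(\bz_1,\bz_2)$ in the optimality condition for $\bz_2$, using the elementary identities $\max(\bz_1,\bz_2)-\bz_1=\bv$ and $\min(\bz_1,\bz_2)-\bz_2=-\bv$. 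Subtracting the two resulting inequalities yields $\langle \bM(\bz_1-\bz_2)+(\bq^{(1)}-\bq^{(2)}),\bv\rangle \geq 0$, which after substituting $\bz_1-\bz_2=\bu-\bv$ rearranges into
\begin{equation*}
	\langle \bM\bv,\bv\rangle \,\leq\, \langle \bM\bu,\bv\rangle + \langle \bq^{(1)}-\bq^{(2)},\bv\rangle.
\end{equation*}

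The conclusion follows by a sign inspection of the right-hand side. The term $\langle \bq^{(1)}-\bq^{(2)},\bv\rangle$ is non-positive because $\bq^{(1)}\leq\bq^{(2)}$ and $\bv\geq\bzero$. The term $\langle \bM\bu,\bv\rangle$ is also non-positive: the disjointness of the supports of $\bu$ and $\bv$ kills all diagonal contributions $M_{ii}u_iv_i$, leaving only terms $M_{ij}u_jv_i$ with $i\neq j$, each of which is the product of a non-positive entry $M_{ij}$ with a non-negative $u_jv_i$. Hence $\langle \bM\bv,\bv\rangle \leq 0$, and positive definiteness of $\bM$ forces $\bv=\bzero$, i.e.\ $\bz_1\geq\bz_2$.

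The main bit of work is the sign- and support-bookkeeping when passing from the two variational inequalities to the $(\bu,\bv)$-decomposition, and I do not expect any substantive obstacle there. It is reassuring that the argument uses both hypotheses on $\bM$: the non-positivity of the off-diagonal entries enters precisely through $\langle \bM\bu,\bv\rangle\leq 0$, while positive definiteness is used only in the final step. This matches the well-known fact that antitonicity of LCP solutions fails as soon as either hypothesis is dropped.
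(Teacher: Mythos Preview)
Your proof is correct. The variational-inequality step, the $(\bu,\bv)$ decomposition, and the sign analysis all go through exactly as you describe; the choice of test points $\max(\bz_1,\bz_2)$ and $\min(\bz_1,\bz_2)$ is precisely what makes the off-diagonal sign hypothesis bite.

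The paper, however, does not prove this proposition at all: it simply identifies $\bM$ as a $\bZ$-matrix (non-positive off-diagonal) and a $\bP$-matrix (positive definite, hence all principal minors positive), concludes that $\bM$ is a $\bK$-matrix, and then invokes Theorem~3.11.9 of Cottle, Pang and Stone. So your argument is genuinely different---and strictly more informative---than the paper's, which outsources everything to the monograph. Your proof is self-contained, short, and tailored to the symmetric case at hand (which lets you pass through the quadratic-programming/variational-inequality characterization of Proposition~\ref{prop:optim-LCP}); the citation approach is terser but opaque, and the underlying theorem in Cottle et al.\ is both more general (it does not require symmetry) and proved by rather different means (lattice/least-element arguments for $\bK$-matrices). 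Either route is fine for the paper's purposes, but yours has the virtue of making transparent exactly where each hypothesis on $\bM$ is consumed.
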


\begin{proof}
	This result is provided in the monograph of \citet[Theorem 3.11.9]{cottle2009linear}. 
	
	Indeed, the fact that $\bM$ has non-positive off-diagonal entries means that $\bM$ is a $\bZ$-matrix in the sense of \citet[Definition 3.11.1]{cottle2009linear}. Further, $\bM$ is a symmetric positive definite matrix, thus $\bM$ is a $\bP$-matrix in the sense of \citet[Section 3.3]{cottle2009linear}. Thus $\bM$ is a $\bK$-matrix in the sense of \citet[Definition 3.11.1]{cottle2009linear}. Thus Theorem 3.11.9 of \citet{cottle2009linear} applies.
\end{proof}

\section{Proof of Lemma \ref{lem:nondecreasing}}
\label{sec:proof-prop-nondecreasing}

The DLN dynamics \eqref{eq:GF-reparametrized} form an autonomous ordinary differential equation (ODE) 
\begin{align}
	\label{eq:ODE}
	&\frac{\diff \btheta}{\diff t} = \Psi(\btheta) \, , &&\left(\Psi(\btheta)\right)_i = \theta_i \left(r_i - \sum_{j=1}^{d} M_{ij}\theta_j\right) \, .
\end{align} 
We first show that the set 
\begin{equation*}
	Q = \left\{\btheta \geq \bzero \, \middle\vert \, \forall i\in\{1,\dots,d\}, r_i - \sum_{j=1}^{d} M_{ij} \theta_j \geq 0 \right\}
\end{equation*}
is positively invariant for this ODE, i.e.,~if $\btheta(0) \in Q$, then $\btheta(t) \in Q$ for all $t \geq 0$. The proof is based on Nagumo's theorem, see the original result of \citet{nagumo1942lage} or the recent introduction of \citet[Section 4.2]{blanchini2008set} for instance. Heuristically, Nagumo's theorem states that $Q$ is positively invariant if the vector field $\Psi(\btheta)$ points ``in'' the set $Q$ if $\btheta$ is on the boundary of $Q$. 

More precisely, for $\btheta \in Q$, denote 
\begin{equation*}
	\text{Act}(\btheta) = \left\{i\in \{1,\dots,d\}\,\middle\vert \, r_i - \sum_j M_{ij} \theta_j = 0 \right\}
\end{equation*}
the set of active constraints at $\btheta$ and 
\begin{equation*}
	T_Q(\btheta) = \left\{\nu \in \R^d \, \middle\vert \, \forall i \in \text{Act}(\btheta), -\sum_jM_{ij}\nu_j \geq 0\right\} 
\end{equation*}
the tangent cone to $Q$ at $\btheta$ \cite[Eq. (4.6)]{blanchini2008set}. Then Nagumo's theorem \cite[Corollary 4.8]{blanchini2008set} states that $Q$ is positively invariant for the dynamics \eqref{eq:ODE} if for all $\btheta\in Q$, $\Psi(\btheta) \in T_Q(\btheta)$.

We now check that this latter condition is satisfied. Let $\btheta \in Q$ and $i \in \text{Act}(\btheta)$. We need to show that $-\sum_j M_{ij} \left(\Psi(\btheta)\right)_j \geq 0$. We have
\begin{align*}
	-\sum_j M_{ij} \left(\Psi(\btheta)\right)_j &= -\sum_j M_{ij} \theta_j \left(r_j - \sum_k M_{jk} \theta_k\right)   \\
	&= -M_{ii} \theta_i \left(r_i - \sum_k M_{ik} \theta_k\right) -\sum_{j \neq i} M_{ij} \theta_j \left(r_j - \sum_k M_{jk} \theta_k\right) \, . 
\end{align*}
For the first term, we have $i \in \text{Act}(\btheta)$ and thus $r_i - \sum_k M_{ik} \theta_k = 0$. For the sum, we have $M_{ij} \leq 0$ (Assumption \ref{it:ass-M}), $\theta_j \geq 0$ and $r_j - \sum_k M_{jk} \theta_k \geq 0$ (because $\btheta \in Q$). Thus we indeed have $-\sum_j M_{ij} \left(\Psi(\btheta)\right)_j \geq 0$ and thus $\Psi(\btheta) \in T_Q(\btheta)$. We conclude that $Q$ is positively invariant.

As $\btheta^{(\varepsilon)}(0) = (C_1 \varepsilon^{k_1} , \dots , C_d \varepsilon^{k_d}) \to \bzero$ as $\varepsilon\to 0$ and $\br > 0$, there exists $\varepsilon_0 > 0$ such that $\forall \varepsilon \in (0,\varepsilon_0], \btheta^{(\varepsilon)}(0) \in Q$. Thus $\forall \varepsilon \in (0,\varepsilon_0], \forall t \geq 0, \btheta^{(\varepsilon)}(t) \in Q$. Thus $\forall \varepsilon \in (0,\varepsilon_0], \forall t \geq 0, \forall i \in \{1,\dots,d\}$,
\begin{equation*}
	\frac{\diff \theta_i}{\diff t}  = \theta_i \left( r_i - \sum_{j=1}^{d} M_{ij} \theta_j \right) \geq 0\, .
\end{equation*}

\section{Proof of Proposition \ref{prop:positive-definite}}
\label{sec:proof-prop-positive-definite}

Consider the block matrix
\renewcommand{\arraystretch}{1.5}
\begin{equation*}
	\widetilde{\bM} = 
	\left(
	\begin{array}{c|c}
		\bX & - \by \\ 
	\end{array}
	\right)^\top 
	\left(
	\begin{array}{c|c}
		\bX & - \by \\ 
	\end{array}
	\right)
	= \left(
	\begin{array}{c|c}
		\bM & - \br \\ \hline
		-\br^\top  & \Vert \by \Vert^2 \\
	\end{array}
	\right) \, .
\end{equation*}
From Assumptions \ref{it:ass-r}-\ref{it:ass-M}, $\widetilde{\bM}$ is a matrix with non-positive off-diagonal entries. Thus there exists $\mu \in \R$ such that $\bA = \mu \bI_{d+1} - \widetilde{\bM}$ is a matrix with non-negative entries. Moreover, from Assumption \ref{it:ass-r}, for $i\in\{1,\dots,d\}$, $A_{i,d+1} = A_{d+1,i} = r_i > 0$. This implies that $\bA$ is irreducible (see \citealp[Section 2.2]{cottle2009linear} for a definition). By the Perron-Frobenius theorem \cite[Theorem 2.2.21]{cottle2009linear}, the largest eigenvalue of $\bA$ is simple and there exists an eigenvector $\widetilde{\bv}$ with positive entries associated to this eigenvalue. As a consequence, the smallest eigenvalue $\lambda$ of $\widetilde{\bM} = \mu \bI_{d+1}-\bA $ is simple and associated to $\widetilde{\bv}$. 

We now have two cases:
\begin{itemize}
	\item If the smallest eigenvalue $\lambda$ is positive, then $\widetilde{\bM}$ is positive definite and thus so is the principal submatrix $\bM$.
	\item If $\lambda = 0$, then $\ker \widetilde{\bM} = \{\alpha \widetilde{\bv}, \alpha \in \R\}$. We want to show that $\ker \bM = \{\bzero\}$. Consider $\bv \in \R^d$ such that $\bM \bv = \bX^\top \bX \bv = \bzero$. This implies that $\bX \bv = \bzero$. (This can be seen, for instance, using a singular value decomposition of $\bX$.) Then we perform the block computation 
	\begin{align*}
		\widetilde{\bM} \left(
		\begin{array}{c}
			\bv \\ \hline
			0 \\
		\end{array}
		\right)
		&= \left(
		\begin{array}{c|c}
			\bM & - \br \\ \hline
			-\br^\top  & \Vert \by \Vert^2 \\
		\end{array}
		\right) 
		\left(
		\begin{array}{c}
			\bv \\ \hline
			0 \\
		\end{array}
		\right)  = 	
		\left(
		\begin{array}{c}
			\bM \bv \\ \hline
			-\br^\top \bv \\
		\end{array}
		\right) \\
		&= 	
		\left(
		\begin{array}{c}
			\bM \bv \\ \hline
			-\by^\top \bX \bv \\
		\end{array}
		\right) = \bzero \, .
	\end{align*}
	Thus $\left(
	\begin{array}{c}
		\bv \\ \hline
		0 \\
	\end{array}
	\right) \in \ker \widetilde{\bM} = \{\alpha \widetilde{\bv}, \alpha \in \R\}$. As $\widetilde{\bv} > \bzero$, elements of $\ker \widetilde{\bM}$ have all entries non-zero or all entries equal to $0$. Thus it must be that $\bv = \bzero$. This concludes that $\ker \bM = \{\bzero\}$ and thus that $\bM$ is positive definite. 
\end{itemize}

\section{Proof of Proposition \ref{prop:fixed-points}}
\label{sec:proof-prop-fixed}

Let $I \subset \{1, \dots, d\}$. 

We first prove that $(\bM_{II})^{-1}$ exists. The matrix $\bM_{II}$ has non-positive off-diagonal entries thus $\bM_{II}$ is a $\bZ$-matrix in the sense of \citet[Definition 3.11.1]{cottle2009linear}. Further, $\bM_{II}$ is a symmetric positive definite matrix as a principal submatrix of a positive definite matrix (Proposition \ref{prop:positive-definite}). Thus $\bM_{II}$ is a $\bP$-matrix in the sense of \citet[Section 3.3]{cottle2009linear}. Thus $\bM_{II}$ is a $\bK$-matrix in the sense of \citet[Definition 3.11.1]{cottle2009linear}. From Theorem 3.11.10 of \citet{cottle2009linear}, $(\bM_{II})^{-1}$ exists and has non-negative entries. 

Thus, we can define $\btheta_*^{(I)} \in \R^d$ by the equations $(\btheta_*^{(I)})_I = (\bM_{II})^{-1} \br_I$ and $(\btheta_*^{(I)})_{I^c} = \bzero$. 

We check that $(\btheta_*^{(I)})_I > 0$. Fix $i \in I$. Then $\theta_{*,i}^{(I)} = \sum_{j\in I} \left((\bM_{II})^{-1}\right)_{ij} r_j \geq 0$ from Assumption \ref{it:ass-r} and the fact that $(\bM_{II})^{-1}$ has non-negative entries. Moreover, assume by contradiction that $\theta_{*,i}^{(I)} = 0$. As from Assumption \ref{it:ass-r}, $\br > \bzero$, we have for all $j \in I$, $\left((\bM_{II})^{-1}\right)_{ij} = 0$. Thus a full row of $(\bM_{II})^{-1}$ is~$\bzero$, which contradicts the fact that $(\bM_{II})^{-1}$ is invertible. Thus for all $i \in I$, $\theta_{*,i}^{(I)} > 0$. 

We now check that $\btheta_*^{(I)}$ is a fixed point of \eqref{eq:GF-reparametrized}. Fix $i \in \{1, \dots, d\}$. If $i \in I$,
\begin{equation*}
	r_i - \sum_{j=1}^{d} M_{ij} \theta_{*,j}^{(I)} = r_i - \sum_{j \in I} M_{ij} \left((\bM_{II})^{-1} \br_I\right)_j = \left(\br_I - \bM_{II} (\bM_{II})^{-1} \br_I \right)_i = 0 \, .
\end{equation*}
If $i \notin I$, by definition, $\theta_{*,i}^{(I)} = 0$. In both cases, $\theta_{*,i}^{(I)} \left(r_i - \sum_{j=1}^{d} M_{ij} \theta_{*,j}^{(I)} \right) = 0$. As this is true for all $i \in \{1, \dots, d\}$, this proves that $\btheta_*^{(I)}$ is a fixed point of \eqref{eq:GF-reparametrized}. 

We now take a fixed point $\btheta$ of \eqref{eq:GF-reparametrized} with support $I$, and show that $\btheta = \btheta_*^{(I)}$. It is sufficient to show the equality on the support of the vectors, namely that $\btheta_I = \left(\btheta_*^{(I)}\right)_I = (\bM_{II})^{-1} \br_I$. Consider $i \in I$. As $\btheta$ is a fixed point, $\theta_i \left( r_i - \sum_{j=1}^{d} M_{ij} \theta_j \right) = 0$. But as $i \in I$, the first factor is non-zero. Thus $r_i - \sum_{j=1}^{d} M_{ij} \theta_j = r_i - \sum_{j \in I} M_{ij} \theta_j = 0$. With vector notation, we proved $\br_I - \bM_{II}\btheta_I = 0$, which gives the claimed equality.  

\section{Proof of Proposition \ref{prop:optim-LCP}}
\label{sec:proof-prop-optim-LCP}

In this proof, we use convex duality \cite[Section 5.5]{boyd2004convex}. The Lagrangian associated to \eqref{eq:optim} is 
\begin{equation*}
	L(\btheta, \bw) = \langle \bq, \btheta \rangle + \frac{1}{2} \langle \btheta, \bM \btheta \rangle - \langle \bw, \btheta \rangle
\end{equation*}
where $\bw \in \R^d$ is the Lagrange multiplier associated to the constraint $\btheta \geq \bzero$. As the optimization problem \eqref{eq:optim} is convex, the KKT conditions are necessary and sufficient for optimality. The stationarity condition is 
\begin{equation*}
	\bzero = \nabla_{\btheta} L(\btheta,\bw) = \bq + \bM \btheta - \bw \, ,
\end{equation*} 
the feasibility conditions are $\btheta \geq \bzero$ and $\bw \geq \bzero$, and the complementary slackness condition is $\bw^\top \btheta = 0$. At this point, we have proven the equivalence between:
\begin{enumerate}
	\item $\bz$ is a minimizer of the constrained optimization problem 
	\begin{equation*}
		\underset{\btheta \in \R^d, \, \btheta \geq \bzero}{\rm{min}.} \left\{ \langle \bq, \btheta \rangle + \frac{1}{2} \langle \btheta, \bM \btheta \rangle \right\} \, .
	\end{equation*}
	\item There exists $\bw \in \R^d$ such that $(\bw,\bz)$ is a solution of 
	\begin{align*}
		&\bw = \bq + \bM \bz \, , \\
		&\bw \geq \bzero \, , \qquad \bz \geq \bzero \, , \qquad \bw^\top \bz = 0 \, .
	\end{align*}
\end{enumerate}
We are left with proving that the solutions of both problems are indeed unique. For the LCP, this is given by Proposition \ref{prop:unicity-sol-lcp} as $\bM$ is positive definite (Proposition~\ref{prop:positive-definite}). We can then use the equivalence shown above to prove that the constrained optimization problem~\eqref{eq:optim} has a unique solution.

\vskip 0.2in
\bibliography{biblio}

\end{document}